\documentclass{article}
\usepackage{ijcai26}

\usepackage{times}
\usepackage{soul}
\usepackage{url}
\usepackage[title]{appendix}
\usepackage[hidelinks]{hyperref}
\usepackage[utf8]{inputenc}
\usepackage[small]{caption}
\usepackage{graphicx}
\usepackage{amsmath}
\usepackage{amsthm}
\usepackage{amssymb}
\usepackage{amsfonts}
\usepackage{booktabs}
\usepackage{algorithm}
\usepackage{algorithmic}

\usepackage[switch]{lineno}

\usepackage{mathtools}
\usepackage{xspace}
\newcommand{\R}{\mathbb{R}}

\newcommand{\N}{\mathbb{N}}
\newcommand{\Z}{\mathbb{Z}}

\newcommand\Distr{\textit{Distr}}

\newcommand\mdp{\mathcal{M}}
\newcommand\states{S}
\newcommand\actions{A}
\newcommand\transitions{\delta}

\newcommand\dtmc{\mathcal{D}}

\newcommand{\paths}{\textit{Paths}}

\newcommand\mdptup[1] [\refl]{\langle #1[\states], \allowbreak #1[\actions], \allowbreak #1[\sinit], \allowbreak #1[\transitions] \rangle}

\newcommand{\state}{s}
\newcommand{\sinit}{\state_0}
\newcommand\action{a}

\newcommand{\strategy}{\sigma}
\newcommand{\strategies}{\Sigma}

\DeclareMathOperator*{\U}{\textbf{U}}

\newcommand*{\cond}[1]{s_0, #1, t}
\newcommand*{\condPath}[1]{#1, t}

\DeclareMathOperator{\imp}{imp}

\newcommand{\impSigma}[1]{\imp_{\mdp^{\strategy}}^{s_0, t}(#1)}

\DeclareMathOperator{\Reach}{\overleftarrow{R}}

\newcommand{\qp}{$\text{QP}^*$\xspace}
\newcommand{\lp}{$\text{LP}^*$\xspace}
\newcommand{\gqp}{QP\xspace}

\usepackage{tikz}
\usetikzlibrary{positioning,angles,quotes}
\usetikzlibrary{arrows.meta}
\usetikzlibrary{shapes,snakes}
\usetikzlibrary{calc}
\usetikzlibrary{
  decorations.markings,
  decorations.pathmorphing,
  decorations.text,
}

\usepackage{diagbox}
\usepackage{todonotes}
\usepackage{mdframed}
\newenvironment{problemstatement}[1]
  {\mdfsetup{
    frametitle={\colorbox{white}{\space#1\space}},
    innertopmargin=-5pt,
    frametitleaboveskip=-\ht\strutbox,
    frametitlealignment=\center
    }
  \begin{mdframed}[nobreak=true]%
  }
  {\end{mdframed}}

\usepackage[draft,silent]{fixme}
\fxsetup{theme=color,mode=multiuser}
\definecolor{fxtarget}{rgb}{0.8000,0.0000,0.0000}
\FXRegisterAuthor{ap}{AP}{\color{orange} Andrea}
\FXRegisterAuthor{pk}{PK}{Paul}

\usepackage{subcaption}

\usepackage{xcolor}

\usepackage{multirow}

\newcommand{\unit}[1]{\,\mathrm{#1}}

\newtheorem{example}{Example}
\newtheorem{theorem}{Theorem}
\newtheorem{lemma}{Lemma}
\newenvironment{proofsketch}{%
\proof}{\endproof}

\usepackage{cleveref}
\crefname {algorithm}   {Alg.}        {algorithms}
\Crefname {algorithm}   {Algorithm}   {Algorithm}
\crefname {equation}    {Eq.}         {equations}
\Crefname {equation}    {Equation}    {Equations}
\crefname {proposition} {Prop.}       {propositions}
\Crefname {proposition} {Proposition} {Propositions}
\crefname {lemma}       {Lemma}       {lemmata}
\Crefname {lemma}       {Lemma}       {Lemmata}
\crefname {listing}     {Listing}     {listings}
\Crefname {listing}     {Listing}     {Listings}
\crefname {definition}  {Def.}        {definitions}
\Crefname {definition}  {Definition}  {Definitions}
\crefname {theorem}     {Thm.}        {theorems}
\Crefname {theorem}     {Theorem}     {Theorems}
\crefname {figure}      {Fig.}        {figures}
\Crefname {figure}      {Figure}      {Figures}
\crefname {figure*}     {Fig.}        {figures}
\Crefname {figure*}     {Figure}      {Figures}
\crefname {page}        {p.}          {pages}
\Crefname {page}        {Page}        {Pages}
\crefname {section}     {Sect.}       {sections}
\Crefname {section}     {Section}     {Sections}
\crefname {example*}    {Example}     {examples}
\Crefname {example*}    {Example}     {Examples}
\crefname {table}       {Tbl.}        {tables}
\Crefname {table}       {Table}       {Tables}

\title{Attribution-based Explanations for Markov Decision Processes}

\author{
Paul Kobialka$^1$
\and
Andrea Pferscher$^1$\and
Francesco Leofante$^2$\and
Erika Ábrahám$^3$\and\\
Silvia Lizeth Tapia Tarifa$^1$\And
Einar Broch Johnsen$^1$\\
\affiliations
$^1$University of Oslo, Norway\\
$^2$Imperial College London, United Kingdom\\
$^3$RWTH Aachen University, Germany\\
\emails
\{paulkob, sltarifa, einarj\}@ifi.uio.no,
f.leofante@imperial.ac.uk,\\
abraham@cs.rwth-aachen.de
}

\begin{document}

\maketitle

\begin{abstract}
  Attribution techniques explain the outcome of an AI model by assigning a numerical score to its inputs. So far, these techniques have mainly focused on attributing importance to static input features at a single point in time, and thus fail to generalize to sequential decision-making settings. This paper fills this gap by introducing techniques to generate attribution-based explanations for Markov Decision Processes (MDPs). We give a formal characterization of what attributions should represent in MDPs, focusing on explanations that assign importance scores to both individual states and execution paths. We  show
how importance scores can be computed  by leveraging techniques for strategy synthesis, enabling the efficient computation of these scores despite the non-determinism inherent in an MDP. We evaluate our approach on five case-studies, demonstrating its utility in providing interpretable insights into the logic of sequential decision-making agents.

\end{abstract}

\section{Introduction}
\label{sec:introduction}
Let us consider a loan application process, in which clients applying for a loan interact with a bank to discuss their eligibility and receive quotes on possible loans.  While established techniques from Process Mining and Machine Learning can be used to automate process monitoring and decision-making (see, e.g.,~\cite{teinemaa2019outcome,leo2019machine,shi2022machine,montevechi2024advancing}), the overall procedure remains inherently semi-structured. A successful application involves a sequence of interdependent steps, e.g., consultations and document submissions. Each step influences the probability of a successful outcome. Because the impact of these steps is often uncertain and state-dependent, the process can be viewed as a series of state transitions guided by both agent decisions and external variables.

Such stochastic, sequential decision-making tasks can be formally represented using Markov Decision Processes (MDPs) (e.g.,~\cite{baier2008principles}). 
The application is then modeled as a system where transitions between process steps depend on the current state and specific actions, capturing the probabilistic nature of the problem. While MDPs effectively capture possible decisions, they lack native explainability mechanisms to communicate the significance of specific states (or sequences thereof) to a human observer.

\emph{Attribution-based explanations} offer a potential solution to this problem by assigning importance scores to process elements based on their contribution to the final outcome. For a bank consultant or an applicant, such methods could provide intuitive answers to questions such as: ``\emph{Which specific steps in this application process were most critical in securing the loan approval?}''. Unfortunately, existing attribution techniques are primarily designed for static, one-step prediction tasks~\cite{bodria2023benchmarking,molnar2020interpretable} and fail to transfer to complex sequential decision-making settings.

\paragraph{Contributions.} To bridge this gap, we introduce a novel framework for attribution-based explanations specifically designed for decision-making in MDPs. Unlike existing methods, our approach accounts for the long-term impact of transitions by evaluating the importance of a state relative to a specific strategy (or the set of all possible strategies) for reaching a goal state. Specifically, we define an attribution measure that quantifies state importance based on the reachability probabilities of optimal strategies, ensuring that explanations reflect the most efficient paths to reach a goal. We further extend this notion from individual states to execution paths and propose two optimization approaches to efficiently compute importance scores. Finally, we evaluate our approach on five case studies, demonstrating its feasibility on complex sequential decision-making tasks modeled by MDPs, with thousands of states and tens of thousands of transitions.

\paragraph{Outline.}
\Cref{sec:related_work} reviews related work on attribution-based explanations and \cref{sec:prelim} presents background on MDPs, optimization and attribution-based explanations in Explainable AI.
Our attribution-based explanations for MDPs are formalized in \cref{sec:problem_statement}, and our encoding for computing them is presented in \cref{sec:algorithm}. We evaluate the performance and stability of our encoding in \cref{sec:experiments} and conclude in \cref{sec:conclusion}.


\section{Related Work}
\label{sec:related_work}
In the context of attribution-based explanations, Shapley values~\cite{shapley1953value} have emerged as a dominant theoretical framework. Originally developed to determine fair contributions in cooperative games, they have been widely adapted to machine learning to quantify the impact of input features on model outputs~\cite{LundbergL17}.
Closer to our work, \cite{baier2021responsibility} leveraged Shapley values to propose a notion of attribution that captures entire paths in parametric Markov Chains and, later, extended this notion to general transition systems by transforming them into ``one-shot'' cooperative games~\cite{baier2025responsibility}. Finally, \cite{triantafyllou2021blame} proposed several attribution techniques, including the Shapley value and Banzhaf index~\cite{Banzhaf_1965_5380}, to assign blame to actors in a sequential cooperative multi-agent setting.
These works, however, either do not deal with MDPs, or abstract from the MDP structure and do not leverage reasoning about strategies to determine importance scores. 

This paper approaches the problem from a different angle and exploits the structure of MDPs to provide explanations for sequential decision making in MDPs. To this aim, we
devise novel methods to compute attribution scores based on strategy synthesis. While strategies have been used to provide explanations for MDPs~\cite{KobialkaGLATJ25}, their work targeted counterfactual explanations --- not attribution-based explanations --- and is as such complementary to this work.


\section{Preliminaries}
\label{sec:prelim}

For a finite set $X$, let $\Distr(X)$ be the set of all 
\emph{distri\-butions over $X$}, which are functions $\mu: X \to [0,1]$ with $\sum_{x  {\in} X} \mu(x) {=} 1$; we set $\textit{supp}(\mu)=\{x{\in} X\,|\,\mu(x){>}0\}$.

\subsection{Markov Decision Processes}\label{sec:mdp}
A \emph{Markov Decision Process} (MDP) $\mdp$ is a tuple $\mdptup$ with a finite non-empty set $\states$ of states, a finite set $\actions$ of actions, an initial state $\sinit \in \states$, and a partial transition function $\transitions: \states \times \actions \rightharpoonup \Distr(\states)$.
An action $a \in \actions$ is \emph{enabled} in state $s \in \states$, if $\transitions(s, a)$ is defined.
Let $\actions(s)$ be the set of enabled actions in state $s$; we require that $\actions(s)\not=\emptyset$ for all $s \in \states$.
Transition probabilities $\transitions(s,a)(s')$ are abbreviated  $\transitions(s,a,s')$.

Let $\mdp=\mdptup$ be an MDP. An \emph{infinite path} is an in\-finite sequence $\tau = s_0, a_0,\allowbreak s_1, {\dots}$ of alternating states $s_i\in\states$ and actions $a_i \in A(s_i)$ with $\transitions(s_i,a_i, s_{i+1})>0$ for all $i \geq 0$. A \emph{finite path} is a non-empty finite prefix $\tau=s_0,\ldots,s_n$ of an infinite path; $\tau$ is \emph{simple} if $s_i \neq s_j$ for all $0 \leq i < j \leq n$, and 
its \emph{cylinder set} $\textit{Cyl}(\tau)$
consists of all infinite paths extending $\tau$.  Let $\paths_{\mdp}(s)$ and $\paths^{\textit{fin}}_{\mdp}(s)$ be the sets of all
infinite resp. finite paths of $\mdp$ starting in $s \in \states$. 
A state $t \in \states$ is \emph{reachable} from
$s \in \states$, if a finite path starts in
$s$ and ends in $t$; let $\Reach(t)$ be the set of all states from
which $t$ is reachable.

\emph{Strategies} (or \emph{schedulers}) resolve
nondeterminism in MDPs by deciding which actions to take.
We consider \emph{stochastic memoryless} strategies $\strategy: \states \to \Distr(\actions)$ with $\textit{supp}(\strategy(s))\subseteq\allowbreak\actions(s)$ for all $s\in\states$,
forming the set $\Sigma_\mdp$.
We abbreviate strategy probabilities
$\strategy(s)(a)$ as $\strategy(s,a)$, and call $\strategy$ 
\emph{deterministic} if $|\textit{supp}(\strategy(s))|=1$
for any $s \in \states$.    
\emph{Finite memory} strategies extend memoryless ones with a finite set $M$ of memory modes, differentiating how the current state is reached. Finite memory strategies for an MDP with states $\states$ can be simulated by memoryless strategies on an MDP with state space $\states \times M$. Therefore, we use the memoryless notation also for finite memory strategies.

For a strategy $\sigma$ of $\mdp=\mdptup$, its \emph{induced Markov Chain} is
$\mdp^\sigma =\langle \states, \sinit, \transitions^{\sigma} \rangle$, where $\transitions^{\sigma} \colon \states \times \states \to [0,1]$ with $\transitions^\sigma(s, s') = \sum_{a \in \actions} \sigma(s)(a)\cdot\transitions(s, a, s')$ for all $s,s' \in \states$.
With $\dtmc = \mdp^\sigma$ we associate the
\emph{probability space} $(\paths_\dtmc(\sinit), \allowbreak \{\bigcup_{\tau \in R} \textit{Cyl}(\tau) \mid R~\subseteq~\paths^{\textit{fin}}_{\dtmc}(\sinit)\}, \allowbreak \Pr_\dtmc(\sinit))$, where the probability of the cylinder set of a finite path $\tau = s_0\ldots s_n$ is $\Pr_\dtmc(\sinit)(\textit{Cyl}(\tau)) = \prod_{i=0}^{n-1} \transitions^{\sigma}(s_{i}, s_{i+1})$.
For a state $t$, let $\Sigma^t_\mdp$ be the set of all strategies $\strategy\in\Sigma_{\mdp}$ with a positive probability of reaching $t$ in $\mdp^{\sigma}$.



\begin{figure}
    \centering
        \resizebox{0.9\linewidth}{!}{
        \tikzstyle{line} =[line width=2\unitlength, black!60, >=latex]
\begin{tikzpicture}[
  rflow/.style = {thick,->},
  circstyle/.style = {shape=circle,draw=black, minimum size=5pt, inner sep=0pt,thick},
  eclipstyle/.style = {shape=ellipse,draw=black,thick},
  ]
    \node[eclipstyle] (s0) at (0,0) {$s_0$};
    \node[circstyle] (s0consult) at (2.4,-0.8) {};
    \node[circstyle] (s0apply) at (-1.1,-0.8) {};
    
    \node[eclipstyle] (Application) at (-3,-2) {Application};
    \node[circstyle] (applicationProvider) at (-3,-3) {};

    \node[eclipstyle] (Error) at (1,-1.5) {Error};
    \node[circstyle] (errorConsult) at (0.52,-2.2) {};
    \node[circstyle] (errorQuit) at (3.0,-2.2) {};

    \node[eclipstyle] (Consultation) at (0,-3) {Consultation};
    \node[circstyle] (consultationApply) at (0,-3.8) {};
    \node[circstyle] (consultationQuit) at (2,-3.8) {};

    \node[eclipstyle] (Angry) at (2.8,-4.5) {Angry};
    \node[circstyle] (angryQuit) at (3.5,-5.5) {};

    \node[eclipstyle] (Application+) at (0,-4.8) {$\text{Application}^+$};
    \node[circstyle] (application+Provider) at (0,-5.8) {};

    \node[eclipstyle] (Rework) at (0,-6.6) {Rework};
    \node[circstyle] (reworkSubmit) at (0,-7.5) {};
    \node[circstyle] (reworkQuit) at (2,-7.5) {};

    \node[eclipstyle] (Resubmit) at (0,-8.4) {Resubmit};
    \node[circstyle] (resubmitProvider) at (0,-9.2) {};

    \node[eclipstyle] (Granted) at (-3,-10) {Granted} ;
    \node[eclipstyle] (Rejected) at (3,-10) {Rejected} ;

    \path [rflow] (s0) edge node[left, near start, xshift=-1mm,yshift=1mm] {Apply} (s0apply);
    \path [rflow] (s0) edge node[right, near start, xshift=3mm, yshift = 1mm] {Consult} (s0consult);
    \path [rflow] (s0apply) edge node[left, near start, xshift=-3mm] {$0.95$} (Application);
    \path [rflow] (s0apply) edge node[right, near start, xshift=3mm, yshift=1mm] {$0.05$} (Error);
    \path [rflow] (s0consult) edge[bend left = 35] node[near end,xshift=2mm,yshift=-1mm] {$1$} (Consultation);

    \path [rflow] (Application) edge node[left] {Provider} (applicationProvider);
    \path [rflow] (applicationProvider) edge[bend right = 20] node[left] {0.5} (Application+);
    \path [rflow] (applicationProvider) edge[bend left = 20, near start] node[right, xshift=1mm, yshift=-1mm] {0.5} (Consultation);

    \path [rflow] (Error) edge node[left, xshift=-2mm] {Consult} (errorConsult);
    \path [rflow] (Error) edge node[right, xshift=1mm, yshift=2mm] {Quit} (errorQuit);
    \path [rflow] (errorConsult) edge node[left, xshift=-1mm,yshift=1mm] {$1$} (Consultation);
    \path [rflow] (errorQuit) edge[bend left = 55] node[right] {$1$} (Rejected);

    \path [rflow] (Application+) edge node[right, xshift=1mm,yshift=-1mm] {Provider} (application+Provider);
    \path [rflow] (application+Provider) edge node[right, xshift=1mm,yshift=1mm] {0.1} (Rework);
    \path [rflow] (application+Provider) edge[bend right = 45] node[left,xshift=-1mm] {0.9} (Granted);

    \path [rflow] (Rework) edge node[left, near start,xshift=-1mm,yshift=-1mm] {Submit} (reworkSubmit);
    \path [rflow] (Rework) edge node[right, near start, xshift=1mm, yshift=1mm] {Quit} (reworkQuit);
    \path [rflow] (reworkSubmit) edge node[right, xshift=1mm,yshift=1mm] {1} (Resubmit);
    \path [rflow] (reworkQuit) edge[bend left = 25] node[right] {1} (Rejected);

    \path [rflow] (Consultation) edge node[right, near start, xshift=3mm,yshift=1mm] {Angry} (consultationQuit);
    \path [rflow] (Consultation) edge node[right, xshift=1mm,yshift=-1mm] {Apply} (consultationApply);
    \path [rflow] (consultationQuit) edge[bend left = 10] node[right, yshift=2mm, xshift=-1mm] {1} (Angry);
    \path [rflow] (consultationApply) edge node[right] {1} (Application+);

    \path [rflow] (Angry) edge node[right, near start, xshift=3mm,yshift=1mm] {Quit} (angryQuit);
    \path [rflow] (angryQuit) edge[bend left = 35] node[right] {1} (Rejected);

    \path [rflow] (Resubmit) edge node[left, xshift=-1mm,yshift=-1mm] {Provider} (resubmitProvider);
    \path [rflow] (resubmitProvider) edge node[right, near end, xshift=-2mm,yshift=3mm] {0.2} (Rejected);
    \path [rflow] (resubmitProvider) edge node[left, near end, xshift=2mm,yshift=3mm] {0.8} (Granted);

    \path [rflow] (Granted) edge[loop left] node[left] {1} (Granted);
    \path [rflow] (Rejected) edge[loop right] node[right] {1} (Rejected);
\end{tikzpicture}
    }
    \caption{An MDP model of a loan application process. The bank can only perform the Provider action, which is purely stochastic; thus, the applicant has full control over non-deterministic choices.}
    \label{fig:mdp_ex}
\end{figure}

\begin{example}
    \label{ex:loan}
    The MDP in \Cref{fig:mdp_ex} models a loan application process. Starting in $s_0$,
    the applicant decides between taking a consultation or directly submitting an application.
    If applying directly, the application is error-free with a chance of $95\%$, and the bank decides whether a consultation is needed or the application can be considered directly.
    If an error occurred when applying, the Error state is reached and a consultation is needed.
    In the consultation, the user either applies again, or they become angry and quit the application process.
    In the former case, the submitted application is evaluated by the bank, and may be reworked  one more time before a final decision is made.
\end{example}




\subsection{Non-linear Optimization}
\emph{Mixed Integer Quadratically Constrained Quadratic Problems} (MIQCQPs)~\cite{billionnet2016exact} are non-linear optimization problems, where both objective function and constraints are quadratic functions over real and integer variables.
Formally, MIQCQPs are defined as:

\begin{equation*}
    \begin{array}{lll}
        \min \quad & f_0(x) \\
        \text{subject to } & f_i(x) \leq b_i & \textit{for\ \ } i = 1, \dots, m,\\
        & 0 \leq x_j \leq u_j &  \textit{for\ \ } x_j \in V_{\Z} \cup V_{\R}, \\
        & x_j \in \Z &  \textit{for\ \ } x_j \in V_{\Z}, \\
        & x_j \in \R &  \textit{for\ \ } x_j \in V_{\R},
    \end{array}
\end{equation*}
using integer-valued ($V_\Z$) and real-valued ($V_\R$) variables $x = (x_1, \dots, x_n)$ ($n\in\N$) in the functions $f_i(x)=x^T Q_i x+c^T_i x$ with symmetric matrices  $Q_i\in\R^{n \times n}$ and constants $c_i\in\R^n$ for all $i\in\{0, \dots, m\}$ ($m\in\N$).
Upper bounds align with the domain of the constrained variable, i.e., $u_j\in\Z$ for $x_j\in V_{\Z}$, and $u_j\in\R$ for $x_j\in V_{\R}$.
MIQCQP are generally non-convex and thus hard to solve~\cite{billionnet2016exact,garey1979computers}.

We consider
\emph{hierarchical} objectives~\cite{anandalingam1992hierarchical}, such that the objective function $f_0(x)$ maps to a tuple and higher-indexed objectives are optimized under optimally-solved lower-indexed objectives, i.e., $f_0(x)_1$ is optimized with the additional constraint that $f_0(x)_0$ is optimal.





\subsection{Attribution-based Explanations}

An attribution-based explanation explains the outcome of an AI model
by assigning a numerical \emph{score} to its inputs. Intuitively,
these \emph{attribution scores} represent the degree to which each
input influences the observed result. In Machine Learning, attribution
methods such as LIME, SHAP and
gradient-scoring~\cite{Ribeiro0G16,LundbergL17,SelvarajuCDVPB17,sundararajan2020many}
assign a score to
each input feature (e.g., pixels, words) to reflect their contribution
towards an output (e.g., a classification); see also the survey~\cite{molnar2020interpretable}.
Although well-suited for ML, these methods are generally not applicable for MDPs because they attribute importance to static input features at a single point in time, failing to account for crucial factors in sequential decision-making, such as sequential dependencies, long-term rewards, and stochastic state transitions.


\section{Attribution-based Explanations for MDPs}
\label{sec:problem_statement}



As a first contribution, we give a formal characterization of attributions for MDPs, establishing a quantitative basis for measuring the importance of states and paths, such that questions about their influence in reaching a target can be answered.

\textbf{LTL notation. }In the sequel, we will use notation from Linear Temporal Logic (LTL)~\cite{pnueli1977temporal}, which we briefly recall. Based on a set of atomic propositions, e.g., a set $\states$ of state names, LTL formulae $\varphi$ are inductively defined by the abstract grammar $\varphi \Coloneqq \top \mid s \mid \varphi \land \varphi \mid \lnot \varphi \mid \bigcirc \varphi \mid \varphi \U \varphi$ with $s \in \states$.
The formula $\bigcirc \varphi$ expresses that $\varphi$ holds in the \emph{next} state, $\varphi_1\U \varphi_2$ that $\varphi_1$ holds \emph{until} $\varphi_2$ gets true, and
$\lozenge \varphi$ that $\varphi$ will \emph{eventually} hold (i.e., $\top \U \varphi$).
For an MDP $\mdp$, a strategy $\strategy \in \strategies_{\mdp}$, and an LTL formula $\varphi$, $Pr_{\mdp^\strategy}(\varphi) = Pr_{\mdp^\strategy}\{\tau \in \paths_{\mdp}(s_0) \mid \tau \models \varphi\}$~\cite{baier2008principles}. 

\subsection{Our formal attribution framework}

As our first contribution, in order to provide a comprehensive view of importance, we formalize four attribution problems. We first address the importance of individual states (\textbf{P1}) and paths (\textbf{P3}) under a fixed strategy, capturing the realization of a specific behavior. We then generalize these notions to quantify the importance of states (\textbf{P2}) and paths (\textbf{P4}) across all strategies, revealing the global importance of these elements within the MDP. 
We begin with quantifying the importance of an individual state within the context of a fixed strategy.

\begin{problemstatement}{P1}
Consider an MDP $\mdp = \mdptup$, a target state $t\in\states$, and a strategy $\sigma \in \Sigma^t_\mdp$.
\emph{How important is a given state $s \in \states$ for reaching $t$ in $\mdp^\strategy$?
 }
\end{problemstatement}

\noindent Intuitively, a state $s\in\states$ is \emph{important} under strategy $\sigma$ for reaching state $t \in \states$, if the induced Markov Chain $\mdp^\strategy$ has a high probability of visiting $s$ before reaching $t$. We denote this probability by
$\Pr_{\mdp^\sigma}(\cond{s}) \coloneqq \Pr_{\mdp^\sigma}(\lozenge t \land (\lnot t \U s))$, 
to capture the probability of eventually reaching $t$ and visiting $s$ before reaching $t$.
If $\mdp$ is loop-free and $s$ is never reached after $t$ on any simple path, then $P_{\mdp^\sigma}(\cond{s})$ reduces to $P_{\mdp^\sigma}(\lozenge t \land \lozenge s)$.


Given the above, we define the importance of a state $s$ for reaching $t$ in MDP $\mdp$ under strategy $\strategy$ as:
\begin{equation}
    \imp_{\mdp^{\sigma}}^{\sinit, t}(s) \coloneqq \frac{\Pr_{\mdp^\sigma}(\cond{s})}{\Pr_{\mdp^\sigma}(\lozenge t)}. \label{eq:importance}
\end{equation}

\noindent To ensure that our importance measure is well-defined, we require that $\strategy\in \strategies^t_\mdp$ (i.e., the probability of reaching $t$ under $\strategy$ is positive).
Note that $\imp_{\mdp^{\sigma}}^{\sinit, t}(s) \in [0,1]$ for any $s \in \states$ and $\strategy \in \strategies^t_\mdp$; $\imp_{\mdp^{\sigma}}^{\sinit, t}(s) = 0$ indicates that $s$ is never visited before reaching $t$, and $\imp_{\mdp^{\sigma}}^{\sinit, t}(s) = 1$ indicates that $s$ is always visited before reaching $t$.
Thus the initial state $s_0$ and the target state $t$ receive an importance of $1$ under all strategies, as they are guaranteed to be visited. However, the importance of the other states might depend on the strategy at hand.

\begin{problemstatement}{P2}
Consider an MDP $\mdp  =  \mdptup$ and a target state $t$.
\emph{How important is a given state $s\in \states$ for reaching $t$?}
\end{problemstatement}

\noindent In contrast to \textbf{P1}, no strategy is assumed  in \textbf{P2}. 
Here, the importance of a state $s \in \states$ depends on whether $s$ is important for reaching $t$ globally.
Intuitively, if $s$ is never visited before reaching $t$ under any strategy, then $s$ is not important. In fact, visiting $s$ might even be detrimental for reaching $t$, and thus receive an importance score of $0$.
As strategies can maximize and minimize the probability of visiting $s$ before reaching $t$, we express importance for \textbf{P2} as a tuple:
\begin{equation*}
    \imp_{\mdp}^{\sinit, t}(s) \coloneqq \big(\min_{\sigma \in \Sigma_\mdp^t} \imp_{\mdp^{\sigma}}^{\sinit, t}(s), \max_{\sigma \in \Sigma_\mdp^t} \imp_{\mdp^{\sigma}}^{\sinit, t}(s) \big).
\end{equation*}
Obviously, importance scores between the lower and upper bound can be realized by mixing the minimizing and maximizing strategies.
However, strategies optimizing $\imp_{\mdp^{\sigma}}^{\sinit, t}(s)$ might require up to one bit of memory to determine whether $s$ was visited before reaching $t$.
If lower and upper bounds align, the probability to visit $s$ before $t$ equals for all $\sigma\in\strategies^t_\mdp$.




In P1, the notion of importance normalizes the probability of visiting  $s$ before
$t$ by dividing by the total probability of reaching $t$.
In addition, we also define \emph{absolute importance} without normalization as the
tuple 
$\big(\min_{\sigma \in \Sigma_\mdp^t} \Pr_{\mdp^\sigma}(\cond{s}),
\max_{\sigma \in \Sigma_\mdp^t} \Pr_{\mdp^\sigma}(\cond{s}) \big)$ of minimal and maximal probabilities
of reaching $t$ via $s$.


Available model checking tools apply \emph{value iteration} to monotonically decrease resp.\ increase the probability of reaching $t$ by local strategy modifications, adapting decisions at individual states. However, the importance of a state is defined as a fraction of \emph{two probabilities} defined by the \emph{same strategy}, and as such it cannot be directly computed by available tools.
  Intuitively, value iteration can be applied to both the nominator and the denominator in isolation, but monotonicity cannot be assured for their fraction.
  See Appendix~\ref{app:monotonicity} for an illustrating example.




We now extend the importance of states for reaching a state $t$, to the importance of paths, where several states need to be visited before reaching $t$.

\begin{problemstatement}{P3}
Consider an MDP $\mdp = \mdptup$, a target state $t$, 
a simple path $\tau = \langle\sinit, a_0, \dots, s_n\rangle$ with $s_i \neq t$ for $0 \leq i \leq n$ and $s_n \in \Reach(t)$, and a strategy $\strategy$.
\emph{How important is $\tau$ for reaching $t$ under $\sigma$?}
\end{problemstatement}

\noindent Intuitively, a path $\tau$ is important for reaching $t$ under strategy $\strategy$, if the induced Markov Chain $\mdp^\strategy$ is likely to follow $\tau$ before reaching $t$.

For a path $\tau$ of the above form, we express the probability of  reaching $t$ after following $\tau$ as $\Pr_{\mdp^\sigma}(\condPath{\tau}) \coloneqq \Pr_{\mdp^\sigma}(\lozenge t \land (\lnot t \U \bigcirc(s_1 \land (\dots \land (\bigcirc s_n) \dots ))))$, which simplifies to $\Pr_{\mdp^\sigma}(\lozenge t \land \bigcirc(s_1 \land (\dots \land (\bigcirc s_n) \dots )))$.

The  definition of the importance of a state in \Cref{eq:importance} is extended to paths as follows.
For a strategy $\strategy$ of the MDP $\mdp$, the importance of $\tau$ for reaching $t$ is defined as: 
\begin{equation*}
    \imp_{\mdp^{\sigma}}^{\sinit, t}(\tau) \coloneqq \frac{\Pr_{\mdp^\sigma}(\condPath{\tau})}{\Pr_{\mdp^\sigma}(\lozenge t)}.
\end{equation*}
Though we could allow paths visiting $t$ or with $s_n \not\in \Reach(t)$,  the importance score would then default to $0$.
Therefore, we constrain $\tau$ not to visit $t$ and end in some $s_n \in \Reach(t)$.

In the absence of a fixed strategy, the importance of path $\tau$ is lower and upper bounded, expressing the global importance of $\tau$ for reaching $t$.

\begin{problemstatement}{P4}
Consider an MDP $\mdp = \mdptup$, a target state $t$, and
a simple path $\tau = \langle\sinit, a_0, \dots, s_n\rangle$, with $s_i \neq t$ for $0 \leq i \leq n$ and $s_n \in \Reach(t)$.
\emph{How important is $\tau$ for reaching $t$?}
\end{problemstatement}

\noindent
As \textbf{P4} abstracts from the given strategy in \textbf{P3}, we define the importance of a path as a tuple of lower and upper bounds on the importance of $\tau$ for reaching $t$ under any strategy:
\begin{equation*}
    \imp_{\mdp}^{\sinit, t}(\tau) \coloneqq \big( \min_{\strategy \in \strategies_\mdp^\tau} \imp_{\mdp^{\strategy}}^{\sinit, t}(\tau), \max_{\strategy \in \strategies_\mdp^\tau} \imp_{\mdp^{\strategy}}^{\sinit, t}(\tau) \big)\, ,
\end{equation*}
where the set $\strategies_\mdp^\tau$ of strategies from $\strategies_\mdp^t$ follows the action choices in $\tau$, i.e., $\strategies_\mdp^\tau = \{ \strategy \in \strategies_\mdp^t \mid \strategy(s_i) = a_i \text{ for } 0 \leq i < n \} $.
We restrict the strategies over which the importance of a path is optimized to $\strategies_\mdp^\tau$, as the lower bound is trivially realizable by not following $\tau$.

We conclude this section with \Cref{ex:loan_importance} to illustrate our definitions on the loan application from \Cref{ex:loan}.

\begin{example}
    \label{ex:loan_importance}
    A bank employee restructuring the loan application process from
    \Cref{ex:loan} wants to determine the importance of the
    consultation state for a successful application.  Our attribution framework would show that the importance
    of
    $\text{Application}^+$ is lower and upper bounded by one, as every
    successful application must visit that state,
    $\imp_{\mdp}^{\sinit, \text{Granted}}(\text{Application}^+) =
    (1,1)$. The importance of
    $\text{Angry}$, which is detrimental for a successful application,
    is lower and upper bounded by $0$, $\imp_{\mdp}^{\sinit,
      \text{Granted}}(\text{Angry}) =
    (0,0)$.  The importance of the path $\tau = \langle s_0, \text{Apply},
    \text{Application}\rangle$ is lower bounded by
    $0.9$, and upper bounded by $1$: $\imp_{\mdp}^{\sinit,
      \text{Granted}}(\tau) = (0.9, 1)$. 
        
\end{example}

Such importance bounds provide a formal yet interpretable measure of how specific states and paths contribute to -- or hinder -- the reachability of a target. In the following section, we detail our novel approach used to compute these intervals.

\section{Computing Importance}
\label{sec:algorithm}
After defining importance, in this section we propose a framework to compute its value. While our method is applicable to both states and paths, the following derivations focus on states, and we discuss the generalization to paths at the end of this section.
We introduce three encodings: (i) \gqp, a non-linear encoding optimizing importance over all strategies, (ii) \qp, a non-linear encoding optimizing over reachability-optimal strategies, and (iii) \lp, a linear encoding optimizing importance over reachability-optimal strategies.
Theoretical results connect the encodings: \cref{lem:nonconvex} establishes that \gqp is computationally difficult and Theorem~\ref{thrm:sound_complete} proves its correctness; \cref{lem:pure_strat} enables the cheaper linear encoding \lp.

Assume an MDP $\mdp^* = \langle \states^*, \actions, \sinit, \transitions^* \rangle$ and let $\hat{s} \in S^*\setminus\{s_0\}$ be the state of interest for reaching state $t\in\states^*$.
Strategies optimizing $\imp_{\mdp^*}^{s_0, t}(\hat{s})$ require 1 bit of memory to remember whether $\hat{s}$ was visited. 
An initial preprocessing step introduces this memory,
analogously to the memory construction in LTL model checking~\cite{baier2008principles}.


\paragraph{Preprocessing.} 
We construct the MDP $\mdp = \langle \states{,} A{,} s_0^\bot{,} \transitions \rangle$ with two copies $s^\top$ and $s^\bot$ for each state $s \in \states^*$: $\states = \{s^\top, s^\bot \mid s \in \states^* \}$.
Index $\top$ indicates that $\hat{s}$ was visited and $\bot$ that $\hat{s}$ was not visited.
Accordingly,
$\transitions(s_1^{\bowtie_1},a,s_2^{\bowtie_2})$ is $\transitions^*(s_1,a,s_2)$ if  $s_2^{\bowtie_2}=\hat{s}^\top$ or ($\bowtie_1=\bowtie_2$ and $s_2\not=\hat{s}$), and $0$ otherwise,
for all $s_1,s_2 \in \states^*$, $a \in \actions$, and $\bowtie_1,\bowtie_2 \in \{\top, \bot\}$.





\paragraph{Encoding as Quadratic Program (\gqp).}
Based on the preprocessed MDP $\mdp = \mdptup$,
\cref{fig:encoding_mixed_scheduler} shows the encoding of the lower bound $\min_{\sigma\in\Sigma_{\mdp^*}^t} \imp_{{\mdp^*}^\sigma}^{s_0, t}(\hat{s})$ on the importance of $\hat{s}$ as the solution to a nonlinear optimization problem.
\gqp uses three types of variables: (1) $p_{sa} \in [0,1]$ stores the probability of choosing action $a \in \actions$ in state $s \in \states$ by the computed strategy, (2) $p_{s, t^{\bowtie}} \in [0,1]$ stores the probability of reaching state $t^{\top}$ resp. $t^{\bot}$ under the computed strategy, and (3) $\tau_s \in \R$ allows to encode weight-decreasing paths to terminal states to enforce smallest fixedpoints.
Note that $p_{s_0^\bot, t^\bot} + p_{s_0^\bot, t^\top}$ is the probability of reaching $t$ from $s_0$ in the non-preprocessed MDP, as all paths either visit or bypass $\hat{s}$.

\begin{figure}
  \small
  \scalebox{0.99}{
    \begin{minipage}{1.0\linewidth}
\begin{align}
  && \min 
  \frac{p_{s_0^\bot, t^\top}}{p_{s_0^\bot, t^\bot} + p_{s_0^\bot, t^\top}} \label{eq:qp_target}  \\
  && \quad \text{subject to}: \nonumber \\
  \forall s \in \states_1:=S\setminus\{t^\bot,t^\top\}:\hspace*{-5ex} && \sum_{a\in\actions(s)} p_{sa}  = 1 \label{eq:qp_sum}\\
  && p_{t^\bot, t^\bot}  = p_{t^\top, t^\top} = 1 \label{eq:qp_ptBB} \\
  && p_{t^\bot, t^\top}  = p_{t^\top, t^\bot} = 0 \label{eq:qp_ptBT} \\
  && p_{s_0^\bot, t^\bot} + p_{s_0^\bot, t^\top} \geq \epsilon \label{eq:qp_lower_bound}\\
  \!\!\forall s \!\in\! S_1,\bowtie \in\!\{\top, \bot\}: &&\hspace*{-2ex} \label{eq:qp_bellman} 
   p_{s, t^{\bowtie}} \! = \! \!\!\sum_{a \in A(\!s\!)} \sum_{s' \in S} p_{sa} {\cdot} \transitions(s,a,s'){\cdot} p_{s', t^{\bowtie}} \\
  \forall s \in \states \setminus T: 
  &&\hspace*{-4ex} \tau_s + 1 \leq \hspace*{-1.5ex}\sum_{a \in \actions(s)} \sum_{s' \in \states} p_{sa} \cdot \transitions(s,a,s') \cdot \tau_{s'} \label{eq:qp_path}
\end{align}
\end{minipage}
}
\vspace{-12pt}
\caption{Encoding \gqp for optimizing  the importance of $\hat{s} \in \states$ in the preprocessed MDP $\mdp$ as a nonlinear optimization problem. \label{fig:encoding_mixed_scheduler}}
\end{figure}

\cref{eq:qp_target} minimizes an objective function that encodes the importance of $\hat{s}$; maximizing corresponds to minimizing the objective multiplied by $-1$.
Constraint~\ref{eq:qp_sum} ensures that the computed strategy is well-defined (i.e., the probabilities sum up to 1).
Constraints~\ref{eq:qp_ptBB}--\ref{eq:qp_ptBT} encode default values for reaching state $t$,
and Constraint~\ref{eq:qp_lower_bound} ensures a positive probability
of reaching $t$ for an arbitrarily small $\epsilon$.
Constraint~\ref{eq:qp_bellman} encodes the Bellman optimality
condition.  Constraint~\ref{eq:qp_path} encodes a state ordering~\cite{wimmer2012minimal} to ensure that 
bottom strongly connected components not containing $t$ reach $t$ with probability $0$, where the set of terminal states $T \subset
\states$ contains all absorbing states where the only available action
is a self-loop.
Constraint~\ref{eq:qp_ptBT}
is not required for correctness
but can reduce the solving time.
 Constraint~\ref{eq:qp_bellman} can be restricted to states reaching $t^\top$ or $t^\bot$, setting $p_{s,t^{\bowtie}}$ to $0$ for states not reaching $t^{\bowtie}$.
All values in $p_{sa}$ can be transformed into a strategy \emph{with memory} on the non-preprocessed MDP, introducing $\bowtie ~\in \{\top, \bot\}$ as a memory mode to remember whether $\hat{s}$ was visited.

Let $P$ denote the optimization problem defined in \cref{fig:encoding_mixed_scheduler}.
The Bellman-optimality encoding in Constraint~\ref{eq:qp_bellman} is non-convex, preventing polynomial-time solutions:
\begin{lemma}
\label{lem:nonconvex}
    P is in general non-convex.
\end{lemma}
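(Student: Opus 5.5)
Since the lemma only claims non-convexity ``in general'', it suffices to exhibit one concrete instance of $P$ whose feasible set is non-convex. Because the feasible region is the intersection of constraint sets, I will show non-convexity of the feasible region directly. Concretely, I will give two feasible points $x, y$ of $P$ whose midpoint $\frac12(x+y)$ violates some constraint. The culprit is the bilinear equality in Constraint~\ref{eq:qp_bellman}, whose products $p_{sa}\cdot p_{s',t^{\bowtie}}$ couple strategy variables with reachability variables. An equality $z = u\cdot v$ defines the hyperbolic-paraboloid surface, which is not convex.

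\textbf{Instance.} I take a tiny MDP $\mdp^*$ with states $s_0, \hat{s}, t, u$. In $s_0$ there are two actions: $a$ goes to $\hat{s}$ with probability $1$, and $b$ goes to $t$ with probability $1$. From $\hat{s}$ the single action leads to $t$. The state $u$ is a terminal absorbing state, which I keep only to make the loop structure standard. After preprocessing, $s_0^\bot \xrightarrow{a} \hat{s}^\top \to t^\top$ and $s_0^\bot \xrightarrow{b} t^\bot$. Constraints~\ref{eq:qp_ptBB}, \ref{eq:qp_ptBT} and~\ref{eq:qp_bellman} then force $p_{\hat{s}^\top,t^\top}=1$ and $p_{\hat{s}^\top,t^\bot}=0$, and they give
\[
p_{s_0^\bot,t^\top} = p_{s_0^\bot a}, \qquad p_{s_0^\bot,t^\bot} = p_{s_0^\bot b}.
\]
This instance is still linear once the other values are fixed. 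To obtain a genuine product between two free variables, I insert an intermediate state $m$ with two actions: $c$ goes to $t$ and $d$ goes to $u$. I route $a$ from $s_0$ to $m$ and have $\hat{s}$ reached from $m$, or more simply just consider the chain $s_0^\bot \xrightarrow{} m$ with a free $p_{m,t}$. Constraint~\ref{eq:qp_bellman} at $s_0$ then reads $p_{s_0^\bot,t^\top} = p_{s_0^\bot a}\cdot p_{m^\bot,t^\top} + \dots$, which is a product of a free strategy variable and a free reachability variable.

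\textbf{Violation at the midpoint.} I pick the two feasible points $x$ with $p_{s_0a}=1$, $p_{m,t}=0$, and $y$ with $p_{s_0a}=0$, $p_{m,t}=1$. Both give product $0$, so $p_{s_0,t}$ takes the same value in each. At the midpoint the product becomes $\frac14$, but the averaged left-hand side still equals the common value. The equality therefore fails, and the feasible set is not convex. The remaining constraints (Constraints~\ref{eq:qp_sum}, \ref{eq:qp_lower_bound} and~\ref{eq:qp_path}) must be kept satisfiable at $x$ and $y$. I expect to handle this by adding a direct $b$-route to $t$, which guarantees probability at least $\epsilon$ of reaching $t$, and by choosing ranking values $\tau_s$ that decrease along the edges into $t$ and $u$.

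\textbf{Main obstacle.} The main care is in designing the instance so that both chosen points are genuinely feasible, in particular with respect to the ranking constraint~\ref{eq:qp_path}, while the bilinear term really involves two independently free variables. An optional remark would strengthen the result: the objective~\ref{eq:qp_target} is itself a ratio, hence only quasi-convex. This is not needed, since non-convexity of the feasible region already suffices.
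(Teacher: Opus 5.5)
Your route differs from the paper's, and if completed it proves more. The paper's proof is a one-liner: by the textbook definition, a problem is convex only if its objective and constraints are convex (equalities affine), and the bilinear equality~\ref{eq:qp_bellman} is not, with a citation to prior work. That is a statement about the formulation. It does not by itself show that the feasible region is non-convex. You aim for that semantic statement via a concrete instance and a midpoint test, which is the right idea: an equality $z=\alpha\beta+(\text{affine terms})$ fails at the midpoint of two solutions whenever $\Delta\alpha\,\Delta\beta\neq 0$. As written, however, your witness pair does not work.

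Your point $x$ ($p_{s_0a}=1$, $p_{m,t}=0$) is infeasible. Constraint~\ref{eq:qp_sum} forces $p_{s_0b}=0$ there, so with $m$ leading only to $t$ or $u$, the reachability from $s_0^\bot$ is $0<\epsilon$, violating Constraint~\ref{eq:qp_lower_bound}. Your proposed remedy, a direct $b$-route, carries zero probability at exactly this point. Your claim that $p_{s_0,t}$ is equal at $x$ and $y$ is also false, because the $b$-summand switches from $0$ to $1$. This is harmless for the midpoint test, but it shows the computation was not carried out.

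There are two further loose ends. First, you leave the instance in two alternative forms. This matters, because whether the product appears in the $t^\top$ or the $t^\bot$ equation depends on where $\hat{s}$ sits. Second, $p_{m,t}$ is not a free variable. Constraint~\ref{eq:qp_bellman} at $m^\bot$ pins it in terms of $p_{m^\bot c}$, $p_{m^\bot d}$ and $p_{u^\bot,t^{\bowtie}}$. The last of these is unconstrained in $P$ as written, since the Bellman equation at a self-loop is vacuous and $u\in T$ is exempt from~\ref{eq:qp_path}. You must therefore fix it explicitly, e.g.\ to $0$.

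A repair within your own design works as follows. Let $a$ lead from $s_0$ to $m$ and $b$ from $s_0$ to $t$, let $c$ lead from $m$ to $\hat{s}$ and $d$ from $m$ to $u$, and let $\hat{s}$ move to $t$. Put $\alpha=p_{s_0^\bot a}$, $\beta=p_{m^\bot c}$ and $p_{u^\bot,t^{\bowtie}}=0$. Then Constraint~\ref{eq:qp_bellman} gives $p_{m^\bot,t^\top}=\beta$ and $p_{s_0^\bot,t^\top}=\alpha\,p_{m^\bot,t^\top}+(1-\alpha)\,p_{t^\bot,t^\top}=\alpha\,p_{m^\bot,t^\top}$.

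Take $x$ with $(\alpha,\beta)=(1,1)$ and $y$ with $(\alpha,\beta)=(0,0)$.
\begin{itemize}
\item Both reach $t$ with probability $1$ (through $t^\top$ resp.\ $t^\bot$), so Constraint~\ref{eq:qp_lower_bound} holds.
\item Setting $\tau=0,2,4,6$ on the copies of $s_0$, $m$, $\hat{s}$ and on the terminal states satisfies~\ref{eq:qp_path} under every strategy.
\item All remaining variables are obtained by solving the acyclic Bellman equations.
\end{itemize}
At the midpoint, $p_{s_0^\bot,t^\top}=\frac{1}{2}$ and $p_{m^\bot,t^\top}=\frac{1}{2}$. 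Constraint~\ref{eq:qp_bellman} at $s_0^\bot$, however, demands $p_{s_0^\bot,t^\top}=\frac{1}{2}\cdot\frac{1}{2}=\frac{1}{4}$. So the feasible set of this instance of $P$ is not convex, which is the statement you wanted.
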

\begin{proofsketch}
  An optimization problem is convex, if its objective and all
  constraints are convex~\cite{boyd2004convex}.  As
  Constraint~\ref{eq:qp_bellman} is in general non-convex, see
  e.g.~\cite{KobialkaGLATJ25}, the theorem follows.
\end{proofsketch}

\begin{lemma}
  \label{lem:optimal}
  Assume a solution $\nu$ to $P$, which maps each variable $v$ to value $\nu(v)$. 
  Further, let $\strategy$ be the strategy defined by $\strategy(s,a) = \nu(p_{sa})$, for all $\state \in \states$ and $\action \in \actions$.
  Then, the value of the objective function equals $\min_{\sigma \in \Sigma_{\mdp^*}^t} \imp_{{\mdp^*}^{\sigma}}^{\sinit, t}(\hat{s})$.
\end{lemma}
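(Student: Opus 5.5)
The plan is to prove two inequalities, interpreting ``solution'' as an optimal solution of $P$. The comparison space is the set of strategies $\strategy$ on the preprocessed MDP $\mdp$ with positive probability of reaching $\{t^\top,t^\bot\}$. These correspond exactly to the one-bit-memory strategies on $\mdp^*$ in $\Sigma^t_{\mdp^*}$. The paper already observes that such one-bit strategies suffice to optimize $\imp_{{\mdp^*}^\strategy}^{\sinit,t}(\hat{s})$, and I take this memory reduction as given.

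\textbf{Step 1: feasible points are exactly strategies.} First I characterize feasible points of $P$. Fix any stochastic memoryless strategy $\strategy$ on $\mdp$ and set $p_{sa}=\strategy(s,a)$. Constraint~\ref{eq:qp_bellman} together with Constraints~\ref{eq:qp_ptBB}--\ref{eq:qp_ptBT} is then a linear system in the $p_{s,t^{\bowtie}}$ whose solutions include the true reachability probabilities in $\mdp^\strategy$. That system is not uniquely solvable in general: bottom SCCs not containing $t$ admit spurious fixpoints. Constraint~\ref{eq:qp_path} rules these out. Following the state-ordering argument of \cite{wimmer2012minimal}, the existence of $\tau_s$ satisfying Constraint~\ref{eq:qp_path} forces every non-terminal state to reach a terminal state with positive probability. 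So $\mdp^\strategy$ restricted to $S\setminus T$ has no closed classes, the Bellman system has a unique solution there, and it coincides with $\Pr_{\mdp^\strategy}(\lozenge t^{\bowtie})$.

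I expect this to be the main obstacle. Terminal absorbing states other than $t$ need to be pinned to value $0$, which I would handle via the remark that Constraint~\ref{eq:qp_bellman} is restricted to states reaching $t^{\bowtie}$, setting the rest to $0$. In addition, some strategies have non-target end components, and these admit no valid $\tau$. For such strategies I would argue that each one can be modified to leave those end components without changing the probabilities of reaching $t^\top$ and $t^\bot$. For example, switch to an action leading toward a terminal state inside the component, which exists because of the assumption $\actions(s)\neq\emptyset$ and the self-loop structure of $T$. The objective is unchanged by this modification, so the minimum is unaffected.

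\textbf{Step 2: the objective equals importance.} Next I identify the objective with importance. In $\mdp$, every path reaching $t$ either passes through $\hat{s}^\top$ before $t$, in which case it lands in $t^\top$, or it does not, in which case it lands in $t^\bot$. This holds because transitions into $\hat{s}$ always go to the $\top$-copy and the $\top$ copy is closed. Hence
\[
\nu(p_{s_0^\bot,t^\top}) = \Pr(\cond{\hat{s}}), \qquad \nu(p_{s_0^\bot,t^\bot})+\nu(p_{s_0^\bot,t^\top}) = \Pr(\lozenge t).
\]
The objective is therefore exactly $\imp_{{\mdp^*}^{\strategy}}^{\sinit,t}(\hat{s})$ for the one-bit strategy induced by $\strategy$. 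Constraint~\ref{eq:qp_lower_bound} matches $\strategy\in\Sigma^t_{\mdp^*}$, up to the choice of $\epsilon$ below the minimal positive reachability probability. I would note that this threshold is attained by some strategy, or else treat it as arbitrarily small.

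\textbf{Step 3: conclusion.} Combining the two steps, every feasible $\nu$ yields an objective value $\geq\min_{\strategy}\imp$. Conversely, a minimizing one-bit strategy, made valid for Constraint~\ref{eq:qp_path} as in Step 1, gives a feasible point attaining the minimum. Therefore the optimal value of $P$ equals $\min_{\strategy \in \Sigma_{\mdp^*}^t} \imp_{{\mdp^*}^{\strategy}}^{\sinit, t}(\hat{s})$.
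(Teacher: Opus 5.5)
Your Step~2 and the uniqueness argument in Step~1 are sound. The way you dispose of strategies with end components outside $T$ fails, however, and this is exactly where the ``$\le$'' direction breaks. Leaving an end component of non-terminal states sends probability mass to states outside it. Nothing guarantees an exit leading only to non-target terminal states, so in general the exit changes $\Pr(\lozenge t^\top)$ or $\Pr(\lozenge t^\bot)$.

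Concretely, let $\mdp^*$ have states $s_0,u,\hat{s},t$. From $s_0$ the process moves to $t$ and to $u$ with probability $\frac12$ each. State $u$ has a self-loop action \emph{stay} and an action \emph{go} to $\hat{s}$; $\hat{s}$ moves to $t$, and $t$ is absorbing. The strategy choosing \emph{stay} reaches $t$ with probability $\frac12$ and never via $\hat{s}$, so it lies in $\Sigma^t_{\mdp^*}$ and has importance $0$.

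Now $u^\bot\notin T$, since it has two actions. Constraint~\ref{eq:qp_path} at $u^\bot$ therefore reads $\tau_{u^\bot}+1\le p_{u^\bot,\mathit{stay}}\,\tau_{u^\bot}+p_{u^\bot,\mathit{go}}\,\tau_{\hat{s}^\top}$, which forces $p_{u^\bot,\mathit{go}}>0$. Then $u^\bot$ reaches $t^\top$ almost surely, and every feasible point of $P$ has objective value $\frac12$. So no modification argument can work: the optimum of $P$ is $\frac12$, not $0$.

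The statement needs an extra hypothesis, or a changed encoding, that makes Constraint~\ref{eq:qp_path} satisfiable for every strategy. One such hypothesis is that $\mdp$ has no end component consisting of non-terminal states. Then $\tau_s:=-h_s$, with $h_s$ the expected time to hit $T$, satisfies Constraint~\ref{eq:qp_path} with equality, and your three steps go through. Alternatively, Constraint~\ref{eq:qp_path} would have to be imposed only at states with positive reachability value, which requires binary indicators.

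Under such a hypothesis your route coincides with the paper's sketch: Bellman constraints, plus the state-ordering characterization (proved there by induction on the distance to $T$), plus a small $\epsilon$. The paper's sketch has the same hole. After showing that Constraint~\ref{eq:qp_path} is satisfiable exactly when every state reaches $T$ with positive probability, it asserts that Constraint~\ref{eq:qp_lower_bound} lets $P$ range over all of $\Sigma^t$. That overlooks precisely the strategies above.

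Finally, your $\epsilon$ step should rest on the fact that the minimum is attained by one of the finitely many deterministic strategies of $\mdp$. This holds because the objective is linear-fractional over the convex hull of their reachability vectors. Over stochastic strategies, the positive reachability probabilities need not be bounded away from $0$, so ``the minimal positive reachability probability'' may not exist.
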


The following theorem states that the encoding facefully captures the importance of $\hat{s}$.
\begin{theorem}[Soundness and Completeness]
    \label{thrm:sound_complete}
    $P$ is feasible iff the importance of state $\hat{s}$ is well defined.
\end{theorem}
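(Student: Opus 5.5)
The claim is an equivalence: $P$ has a feasible point exactly when $\Sigma^t_{\mdp^*}\neq\emptyset$, i.e., when some strategy of $\mdp^*$ reaches $t$ with positive probability. I would prove the two directions separately, moving between $\mdp^*$ and the preprocessed $\mdp$ through the standard product correspondence. A memoryless strategy on $\mdp$ corresponds to a one-bit-memory strategy on $\mdp^*$. Reaching $t^\top$ in $\mdp$ corresponds to reaching $t$ after visiting $\hat{s}$, and reaching $t^\bot$ to reaching $t$ without visiting it. Hence $\Pr(\lozenge t)$ in $\mdp^*$ equals $\Pr(\lozenge t^\top)+\Pr(\lozenge t^\bot)$ from $s_0^\bot$ in $\mdp$. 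Throughout I read $\epsilon$ as a sufficiently small positive constant, namely smaller than the maximal reachability probability of $t$. This is exactly what ``arbitrarily small'' in Constraint~\ref{eq:qp_lower_bound} must mean for the equivalence to hold.

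\emph{($\Leftarrow$) Well-defined implies feasible.} Suppose $\Sigma^t_{\mdp^*}\neq\emptyset$. Since reachability is optimized by memoryless deterministic strategies, I would take a memoryless strategy $\sigma$ on $\mdp$ maximizing $\Pr(\lozenge\{t^\top,t^\bot\})$, which is positive. Set $p_{sa}=\sigma(s,a)$, and let $p_{s,t^{\bowtie}}$ be the true reachability probabilities in $\mdp^\sigma$. Constraints~\ref{eq:qp_sum}--\ref{eq:qp_lower_bound} then hold immediately, given the choice of $\epsilon$. Constraint~\ref{eq:qp_bellman} holds because true reachability probabilities satisfy the one-step equations.

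For Constraint~\ref{eq:qp_path} I need ranking values $\tau_s$. I would modify $\sigma$ so that every non-terminal state reaches a terminal state with positive probability. If this fails, the strategy has a BSCC consisting of non-terminal states. On that BSCC, reaching $t$ has probability $0$, so switching actions there to leave toward a terminal state cannot lower the reachability of $t$ from $s_0^\bot$. If no such exit exists at all, the constraint system is genuinely infeasible. I would handle this either by arguing that such an end component cannot occur on states relevant to $t$, or by using the remark that Constraints~\ref{eq:qp_bellman}--\ref{eq:qp_path} may be restricted to $\Reach(t^\bowtie)$.

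Once every state reaches $T$, I define $\tau_s$ as minus a sufficiently scaled expected number of steps to $T$, or equivalently use the rank construction of~\cite{wimmer2012minimal}. This choice satisfies $\tau_s+1\le\sum p_{sa}\delta\tau_{s'}$.

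\emph{($\Rightarrow$) Feasible implies well-defined.} Take a feasible $\nu$ and let $\sigma(s,a)=\nu(p_{sa})$. Constraint~\ref{eq:qp_path} forces every non-terminal state to reach $T$ with positive probability under $\sigma$: a BSCC avoiding $T$ would have a maximal $\tau$-state, which violates the strict increase. Therefore the only BSCCs are terminal absorbing states, and the linear system of Constraints~\ref{eq:qp_bellman}, \ref{eq:qp_ptBB} and \ref{eq:qp_ptBT} has a unique solution, namely the true reachability probabilities. Uniqueness follows because the restriction of the transition matrix to $S_1$ is transient, so $I-Q$ is invertible. Hence $\Pr(\lozenge t)=\nu(p_{s_0^\bot,t^\bot})+\nu(p_{s_0^\bot,t^\top})\ge\epsilon>0$. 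Translating $\sigma$ back to a memory strategy on $\mdp^*$ gives an element of $\Sigma^t_{\mdp^*}$, so the importance is well defined.

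The main obstacle lies in the ($\Leftarrow$) direction. There one must produce a strategy that simultaneously reaches $t$ with probability at least $\epsilon$ and admits the ranking function of Constraint~\ref{eq:qp_path}. This requires the redirection argument for non-terminal end components, and it is where the precise choice of $\epsilon$ and the restriction to relevant states must be made explicit.
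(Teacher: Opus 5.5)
Your route differs from the paper's sketch. The paper obtains ($\Rightarrow$) from Lemma~\ref{lem:optimal} and ($\Leftarrow$) by extending an importance-minimizing strategy. Your reachability-maximizing witness, with explicit redirection of non-terminal bottom components, is more careful: the paper's witness need neither meet the $\epsilon$ bound nor reach $T$ from every state.

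However, the uniqueness step in your ($\Rightarrow$) direction is false as stated. The set $S_1=S\setminus\{t^\top,t^\bot\}$ still contains every other terminal state $r\in T$ (e.g.\ both copies of Rejected). For such $r$, Constraint~\ref{eq:qp_sum} forces $p_{ra}=1$ on the self-loop, so Constraint~\ref{eq:qp_bellman} degenerates to $p_{r,t^{\bowtie}}=p_{r,t^{\bowtie}}$. Constraint~\ref{eq:qp_path} does not apply to $r$ at all. Hence $p_{r,t^{\bowtie}}$ is free in $[0,1]$ and $I-Q$ on $S_1$ is singular. The constraints only give $p_{s_0^\bot,t^\bot}+p_{s_0^\bot,t^\top}=\Pr(\lozenge t)+\sum_{r}\Pr(\lozenge r)\,(p_{r,t^\bot}+p_{r,t^\top})$, with $r$ ranging over the other terminal states.

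This breaks the implication. Suppose $t$ is unreachable from $s_0$ while every state can reach some other terminal state. Setting $p_{r,t^\bot}=1$ at those $r$ and solving the Bellman equations yields a feasible point, although the importance is undefined. So $\Pr(\lozenge t)\ge\epsilon$ does not follow; indeed the statement fails for the encoding read literally. The missing ingredient is the boundary condition $p_{s,t^{\bowtie}}=0$ for $s\notin\Reach(t^{\bowtie})$, which the paper mentions in its remark after the constraints and which must be treated as part of $P$. With it, Constraint~\ref{eq:qp_path} makes every state of $S\setminus T$ transient, so $I-Q$ restricted to $S\setminus T$ is invertible and your conclusion goes through.

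In ($\Leftarrow$), the first of your two options cannot work. Constraint~\ref{eq:qp_path} ranges over all of $S\setminus T$, including states from which $t$ is unreachable. A closed end component without terminal states anywhere in $\mdp$ therefore makes $P$ infeasible even when $\Sigma^t_{\mdp^*}\neq\emptyset$, and no argument about states relevant to $t$ prevents this. You need one of two fixes:
\begin{itemize}
\item restrict Constraint~\ref{eq:qp_path} itself to $\Reach(t^\top)\cup\Reach(t^\bot)$ (the paper's remark only restricts Constraint~\ref{eq:qp_bellman}); or
\item assume every state can reach $T$, which the induction in the paper's proof of Lemma~\ref{lem:optimal} silently presupposes.
\end{itemize}
Under the restriction, SCCs of $\mdp^\sigma$ lie entirely inside or outside that set, so your maximal-$\tau$ argument still excludes non-terminal bottom components.

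Under either convention, your redirection completes the proof. Keep $\sigma$ on states from which it reaches $T$, and elsewhere pick actions that decrease the graph distance to $T$; this cannot lower the probability of reaching $t$. Then take $\tau_s$ as minus the expected number of steps to $T$.
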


The proofs for \Cref{lem:optimal} and \Cref{thrm:sound_complete} are contained in Appendix~\ref{app:encoding_properties}.


\paragraph{Importance for optimal reachability.} 
In practice, we are typically interested in the importance of $\hat{s}$ under strategies that maximize the probability of reaching $t$.
Then we can restrict the importance computation to strategies from
$\strategies^* = \{ \strategy \in \strategies_{\mdp^*} \mid P_{{\mdp^*}^{\strategy}}(\lozenge t) \geq P_{{\mdp^*}^{\strategy'}}(\lozenge t) \text{ for all } \strategy' \in \strategies_{\mdp^*}\}$. 
This yields a constant value for the denominator in \cref{eq:qp_target}, and the problem reduces to minimizing $p_{s_0^\bot,t^\top}$.

When only considering  optimal reachability,
\gqp can be further simplified
 as reachability probabilities from strongly connected components can be  lower-bounded directly.
We extend the problem to multi-objective optimization, minimizing the sum of reachability probabilities (under the maximizing schedulers from $\Sigma^*$) before optimizing for importance:
\[
\min \big(\big(\sum_{s \in \states,  \bowtie \in \{\top, \bot\}} p_{s,t^{\bowtie}}\big), \quad 
p_{s_0^\bot, t^\top}\big).
\]
We further simplify computations  by fixing the optimal reachability probability $p^*$ and lower-bounding the reachability of $t^\top$ and $t^\bot$ over all possible actions to replace Constraint~\ref{eq:qp_path}:
\begin{eqnarray}
  p_{s_0^\bot, t^\bot} + p_{s_0^\bot, t^\top} =&&\hspace*{-4ex} p^* \label{eq:qp2_lower_bound} \nonumber \\
  \forall s \in \states_1, a \in \actions: 
  \hspace*{-1ex}\sum_{\bowtie \in \{\top, \bot\}} p_{s, t^{\bowtie}} \geq &&\hspace*{-6ex} \sum_{\substack{s' \in \states, \\ \bowtie \in \{\top, \bot\}}} \hspace*{-2ex}\transitions(s, a, s') \cdot p_{s', t^{\bowtie}} \nonumber
\end{eqnarray}
This encoding, denoted \qp, does not guarantee that the computed strategy is
loop-free, but ensures that the assigned probability corresponds to
actually realizable probabilities. The strategy can always be
repaired to satisfy a state ordering.

We show that restricting strategies to reachability-optimal strategies simplifies the problem of computing importance:

\begin{lemma}
    \label{lem:pure_strat}
    For an MDP $\mdp$, target state $t$, state $\hat{s} \in \states$ and simple path $\tau$, there exists a deterministic strategy optimizing $\imp_{\mdp^{\sigma}}^{\sinit, t}(\hat{s})$ and $\imp_{\mdp^{\sigma}}^{\sinit, t}(\tau)$ over all strategies in $\strategies^*$.
\end{lemma}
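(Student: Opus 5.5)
Since every strategy in $\strategies^*$ reaches $t$ with the same probability $p^*>0$, the denominator of $\imp_{\mdp^{\sigma}}^{\sinit, t}(\hat{s})$ (and of $\imp_{\mdp^{\sigma}}^{\sinit, t}(\tau)$) is constant on $\strategies^*$. Optimizing importance over $\strategies^*$ therefore reduces to optimizing the numerator, $\Pr_{\mdp^\sigma}(\cond{\hat{s}})$ resp.\ $\Pr_{\mdp^\sigma}(\condPath{\tau})$. The plan is to express this numerator as a plain reachability probability in a product MDP and then apply the classical existence result for deterministic optimal strategies. For states I use the preprocessed MDP with the $\top/\bot$ copies, where the numerator is exactly the probability of reaching $t^\top$. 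For a path $\tau=\langle s_0,a_0,\dots,s_n\rangle$ I build an analogous product with a counter $i\in\{0,\dots,n\}$ tracking how much of $\tau$ has been followed, plus a sink for ``deviated''. Because $\strategies_\mdp^\tau$ fixes the actions $a_i$ on the prefix, the numerator becomes reaching $t$ in mode ``$\tau$ completed''.

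Next I encode the restriction to $\strategies^*$ in the standard way. Compute the optimal values $x_s=\max_\sigma \Pr_s(\lozenge t)$ (lifted to the product, where $x$ depends only on the underlying state) and keep only \emph{value-preserving} actions, i.e.\ those $a$ with $x_s=\sum_{s'}\transitions(s,a,s')x_{s'}$. Call the resulting sub-MDP $\mdp'$. A strategy is reachability-optimal iff it uses only value-preserving actions and, in addition, in every end component of $\mdp'$ not containing $t$ with $x>0$ it eventually leaves with probability one. The standard treatment (Baier--Katoen) collapses such end components (or removes the staying actions from states with $x_s>0$ by the usual MEC quotient) to obtain an MDP $\mdp''$. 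In $\mdp''$ every strategy is reachability-optimal, and strategies of $\mdp''$ correspond to optimal strategies of the product.

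In $\mdp''$, the numerator is a reachability probability for the target set $\{t^\top\}$ (resp.\ the $\tau$-completed copy of $t$). Minimal and maximal reachability probabilities in finite MDPs are attained by memoryless deterministic strategies. Mapping such a strategy back through the MEC quotient yields a deterministic memoryless strategy with the same numerator value: inside a collapsed MEC, a deterministic choice of exit action, plus deterministic moves toward the exit state, suffices. The final step projects this strategy to $\mdp$ using the one bit of memory (resp.\ the counter), which the paper's convention treats as memoryless notation for finite-memory strategies. The result is a deterministic strategy in $\strategies^*$ optimizing importance.

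The main obstacle is the end-component step. The two bounds behave differently there. For minimization of $t^\top$, staying forever in a MEC would lower the numerator but breaks optimality, which is exactly why the quotient is needed. One must also check that the bit/counter product does not create MECs mixing $\top$ and $\bot$ copies; this holds because transitions only go from $\bot$ to $\top$, never back. I would first try to reuse the quotient construction from the literature verbatim, and then verify that exits chosen deterministically preserve both $p^*$ and the optimal numerator.
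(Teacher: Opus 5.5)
Your proof is correct, and it takes a genuinely different and considerably more careful route than the paper.

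The paper cites only the classical fact that $\strategies^*_\mdp$ contains a deterministic strategy. From this it concludes that it ``suffices to consider deterministic strategies'' for the numerator, and that for $\tau$ one ``selects'' the deterministic strategy optimizing the probability of following $\tau$. Existence of one deterministic reachability-optimal strategy does not by itself show that a secondary objective optimized over $\strategies^*_\mdp$ is attained deterministically. The paper also never addresses the end-component issue you isolate: a strategy that is optimal for the numerator among value-preserving strategies may linger in a positive-value end component and thereby leave $\strategies^*_\mdp$.

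Your chain is the standard rigorous treatment of such lexicographic objectives:
constant denominator $p^*$; numerator as plain reachability in the memory product; restriction to value-preserving actions; MEC quotient in which every strategy is reachability-optimal; memoryless deterministic optimality of min/max reachability; and lifting back via deterministic ``walk to the chosen exit'' moves.
It also makes explicit that the deterministic strategy lives on the product, which is genuinely needed. Take $s_0\to s$, $s\xrightarrow{a}\hat{s}\to s$, $s\xrightarrow{b}t$. Every memoryless $\strategy\in\strategies^*_\mdp$ has importance $\strategy(s,a)<1$, so the maximum $1$ is attained only with the extra bit. The paper's version buys brevity, nothing more.

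A few points you should make explicit:
\begin{itemize}
\item $t$ should be absorbing, as Constraints~\ref{eq:qp_ptBB}--\ref{eq:qp_ptBT} implicitly assume. Otherwise reaching $t^\top$ also counts paths that visit $t$ before $\hat{s}$.
\item The ``deviated'' part of your path product must be a mode in which the underlying dynamics continue, not an absorbing state. Otherwise $x$ and the restriction to $\strategies^*_\mdp$ are computed on the wrong model.
\item $x$ is constant on each collapsed MEC, by the maximum principle for value-preserving actions. You need this to assign a value to the quotient state.
\item Fixing $a_i$ only on on-track copies yields a slightly larger strategy class than the paper's memoryless $\strategies^\tau_\mdp$, which fixes $\strategy(s_i)=a_i$ globally. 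This is consistent with the paper's finite-memory convention, but you should state it.
\end{itemize}
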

See Appendix~\ref{app:deterministic_strategy} for a proof of \Cref{lem:pure_strat}.
It follows that \qp can be adapted to a mixed-integer linear program, denoted as \lp, without quadratic components,
in which the variables $p_{sa}$ are turned into binary variables (i.e., from $\{0,1\}$).
The optimization objective only optimizes for the importance: $\min 
p_{s_0^\bot, t^\top}$.
Constraints~\ref{eq:qp_bellman} and~\ref{eq:qp_path} are replaced by:
\begin{align}
  & p_{s_0^\bot, t^\bot} + p_{s_0^\bot, t^\top} = p^* \label{eq:lp_lower_bound} \\
  \forall s \in S_1, a \in \actions(s), \bowtie \in \{\top, \bot\}: & \label{eq:lp_bellman}\\ 
   p_{s, t^{\bowtie}}  \leq   \sum_{s' \in S} \transitions(s,a,&s')\cdot p_{s', t^{\bowtie}} + (1 - p_{sa}) \nonumber\\
  \forall s \in S_1 \setminus T, a \in \actions(s): & \label{eq:lp_path}\\
  \tau_s + 1 \leq \sum_{s' \in \states} \transitions(s,a,&s') \cdot \tau_{s'} + (M - M \cdot p_{sa}) \nonumber
\end{align}
Constraint~\ref{eq:lp_lower_bound} ensures that optimal reachability is realized.
Constraint~\ref{eq:lp_bellman} adapts the Bellman optimality equations to discrete actions; only for the chosen action in each state is the constraint not trivially satisfied.
To realize the optimal reachability, those constraints must be satisfied with equation.
Constraint~\ref{eq:lp_path} ensures that a weight-decreasing path exists under the computed strategy, where $M$ is a large constant.


\paragraph{\textit{Remark.}} All the presented encodings can be extended to compute the importance of a simple path $\tau = \langle\sinit, a_0, \dots, s_n\rangle$, with $s_i \neq t$.
The importance of $\tau$ can be expressed as the importance of $s_n$ weighted with the probability of observing $\tau$.
By partially assigning $p_{s_ia_i}=1$ to maximize the probability of following $\tau$ and then solving for the importance of $s_n$, the importance of $\tau$ is the result multiplied with $\prod \transitions^*(s_i, a_i, s_{i+1})$.
The optimization problem is only well defined, and thus feasbile, 
for strategies reaching $t$ with positive probability, i.e., it is ill defined 
if $s_n \not\in \Reach(t)$.


\section{Experimental Evaluation}
\label{sec:experiments}

We now show that our approach is able to compute importance scores efficiently across five case-studies.
\emph{GrepS} records a programming skill evaluation service where customers (1) sign-up, (2) solve programming tasks, and (3) release their test results~\cite{kobialka2024user}.
\emph{BPIC12}~\cite{bpi2012} and \emph{BPIC17}~\cite{bpi2017} are datasets for loan applications
released by the IEEE Task Force on Process Mining.\footnote{\url{https://www.tf-pm.org/competitions-awards/bpi-challenge}}
\emph{MSSD} is the \emph{Music Streaming Sessions Dataset}~\cite{brost2019music} released by Spotify (\emph{Mini} version with $10\,000$ listening sessions).
\emph{Epidemic}~\cite{kazemi2025counterfactual} is a synthetic benchmark on vaccination strategies.


We used passive automata learning~\cite{mao2016learning,DBLP:journals/isse/MuskardinAPPT22} to learn MDPs
from the traces, after applying standard preprocessing techniques.
For MSSD, we construct models that vary in the number of traces; e.g.,
MSSD10 includes $10\%$ of the traces.  For Epidemic, we vary the
maximal population size; e.g., E6 considers a maximum population
size of 6.
The number of states is significantly larger in the MSSD models than
in the other models: MSSD10 contains on average 36 times more states
than BPIC12 or BPIC17, and has a 70 times higher maximum of outgoing transitions. Epidemic
has a similar number of states as MSSD, but the number of transitions is 6 times greater than that of MSSD.  However, the models are less
connected, the largest Epidemic model has a maximum of 48 outgoing transitions, and
MSSD100 of over $5\,600$.
Each model contains a unique target state $t$ that represents success.
For example, in BPIC12 and BPIC17, the loan is granted to the user in  $t$.

For Greps, BPIC12, and BPIC17, we compute the importance of every
state in the model.  For the MSSD models, we
sample 10 states per model.
For the Epidemic models, we use the proposed initial state by~\cite{kazemi2025counterfactual} to compute the importance of the number of available vaccinations for reaching a state without unvaccinated individuals.

All experiments were performed on a workstation with $128\unit{GB}$ memory and a 64-core AMD EPYC processor.
Our implementation and results are
online.\footnote{\url{https://github.com/explainableMDPs/mdp_attributive}}
Python 3.10.12 is used to interface with  Gurobi\footnote{\url{https://www.gurobi.com/}} (v.\,11.0.3), used to solve the optimization problems.
Each run of the solver is restricted to $4\unit{GB}$ of memory and $1$ hour, in all experiments we set $M=10^{16}$ and $\epsilon=10^{-4}$.

\paragraph{Numerical results.} We compute strategies optimizing the importance of states from all models, demonstrating that attribution-based explanations can be computed for models with up to $10\,000$ of states and transitions.
Additionally, we highlight differences between the three presented encodings.

\Cref{tab:greps_bpic} compares the averaged computation times and results for Greps, BPIC12, and BPIC17.
For each model, we compute the lower and upper bounded importance for every state using all three encodings.
Encoding (Enc.) denotes the used encoding: \emph{\gqp} denotes the quadratic program optimizing over $\strategies_\mdp$, \emph{\qp} the quadratic encoding optimizing over reachability optimal strategies $\strategies^*_\mdp$, and \emph{\lp} the linear program.
Opt.\ denotes the number of problems solved optimally, T.O.\ denotes the timeouts, and Total the absolute number of problems.
The remaining columns describe the mean, standard deviation, minimum, and maximum over all solution times in seconds.
While \qp and \lp can solve all inputs without timeout, \lp solves all instances within one second.

\begin{table}[t]
    \centering
    \small
    \setlength{\tabcolsep}{2pt}
    \caption{Computing importance for GrepS and BPIC: averaged runtimes in seconds\label{tab:greps_bpic}}
    \vspace{-6pt}
    \begin{tabular}{llrrrrrrrr}
    \toprule
    \textbf{Model} & \textbf{Enc.} & \textbf{Mean(t)} & \textbf{Std(t)} & \textbf{Min(t)} & \textbf{Max(t)} & \textbf{Opt.} & \textbf{T.O.} & \textbf{Total} \\
    \midrule
    \multirow[t]{3}{*}{\textbf{Greps}} & \textbf{\gqp} & 0.06 & 0.06 & 0.00 & 0.64 & 128 & 0 & 128\\
    \textbf{} & \textbf{\qp} & 0.17 & 0.13 & 0.00 & 0.73 & 128 & 0 & 128\\
    \textbf{} & \textbf{\lp} & 0.03 & 0.03 & 0.00 & 0.19 & 128 & 0 & 128\\
    \multirow[t]{3}{*}{\textbf{BPIC12}} & \textbf{\gqp} & 1616.91 & 1419.14 & 0.02 & T.O. & 192 & 66 & 258 \\
    \textbf{} & \textbf{\qp} & 2.95 & 11.95 & 0.01 & 102.30 & 258 & 0 & 258 \\
    \textbf{} & \textbf{\lp} & 0.02 & 0.02 & 0.00 & 0.11 & 258 & 0 & 258 \\
    \multirow[t]{3}{*}{\textbf{BPIC17}} & \textbf{\gqp} & 23.28 & 209.62 & 0.03 & T.O. & 315 & 1 & 316 \\
    \textbf{} & \textbf{\qp} & 2.66 & 5.25 & 0.02 & 37.49 & 316 & 0 & 316 \\
    \textbf{} & \textbf{\lp} & 0.07 & 0.05 & 0.01 & 0.77 & 316 & 0 & 316 \\
    \bottomrule
    \end{tabular}
    \medskip
    \caption{Computing importance for Epidemic: averaged runtimes in seconds.\label{tab:epidemic}}
    \vspace{-6pt}
    \begin{tabular}{llrrrrrrrr}
    \toprule
    \textbf{Model} & \textbf{Enc.} & \textbf{Mean(t)} & \textbf{Std(t)} & \textbf{Min(t)} & \textbf{Max(t)} & \textbf{Opt.} & \textbf{T.O.} & \textbf{Total} \\
    \midrule
    \multirow[t]{2}{*}{\textbf{E6}} & \textbf{\qp} & 0.65 & 0.78 & 0.01 & 2.52 & 26 & 0 & 26 \\
    \textbf{} & \textbf{\lp} & 0.02 & 0.02 & 0.00 & 0.07 & 26 & 0 & 26 \\
    \multirow[t]{2}{*}{\textbf{E7}} & \textbf{\qp} & 2.28 & 3.17 & 0.01 & 12.10 & 30 & 0 & 30 \\
    \textbf{} & \textbf{\lp} & 0.01 & 0.00 & 0.00 & 0.01 & 30 & 0 & 30 \\
    \multirow[t]{2}{*}{\textbf{E8}} & \textbf{\qp} & 648.31 & 1387.09 & 0.01 & T.O. & 28 & 6 & 34 \\
    \textbf{} & \textbf{\lp} & 0.01 & 0.00 & 0.00 & 0.01 & 34 & 0 & 34 \\
    \multirow[t]{2}{*}{\textbf{E9}} & \textbf{\qp} & 687.61 & 1403.24 & 0.02 & T.O. & 31 & 7 & 38 \\
    \textbf{} & \textbf{\lp} & 0.01 & 0.00 & 0.00 & 0.01 & 38 & 0 & 38 \\
    \multirow[t]{2}{*}{\textbf{E10}} & \textbf{\qp} & 1399.82 & 1751.83 & 0.02 & T.O. & 26 & 16 & 42 \\
    \textbf{} & \textbf{\lp} & 0.01 & 0.00 & 0.01 & 0.02 & 42 & 0 & 42 \\
    \multirow[t]{2}{*}{\textbf{E15}} & \textbf{\qp} & 1742.02 & 1813.74 & 0.08 & T.O. & 32 & 30 & 62 \\
    \textbf{} & \textbf{\lp} & 0.04 & 0.02 & 0.01 & 0.08 & 62 & 0 & 62 \\
    \multirow[t]{2}{*}{\textbf{E19}} & \textbf{\qp} & 1753.99 & 1811.01 & 0.16 & T.O. & 40 & 38 & 78 \\
    \textbf{} & \textbf{\lp} & 0.05 & 0.03 & 0.02 & 0.11 & 78 & 0 & 78 \\
    \bottomrule
    \end{tabular}
\end{table} 

\Cref{tab:epidemic} shows runtimes for Epidemic with population sizes up to 19.
For population sizes larger than 19, the transition probabilities can become very small, i.e., they require more than 10 digits of precision to avoid rounding to $0$. 
Based on the results from BPIC12 and BPIC17, we omitted encoding \gqp as the Epidemic models are significantly larger.
Until E8, encoding \qp can be solved within the one-hour timeout, encoding \lp can be solved within fractions of a second for all models.
Comparing E7 with E8 reveals the growing complexity induced by the population size, the number of states increase from 962 to 1379, and transitions from 5644 to 9029.
All timed-out instances involved states with sufficient vaccinations for reaching $t$, and were well connected in the MDP.

\begin{figure}
\centering
\includegraphics[trim={10 10 10 10},clip, width=0.8\linewidth]{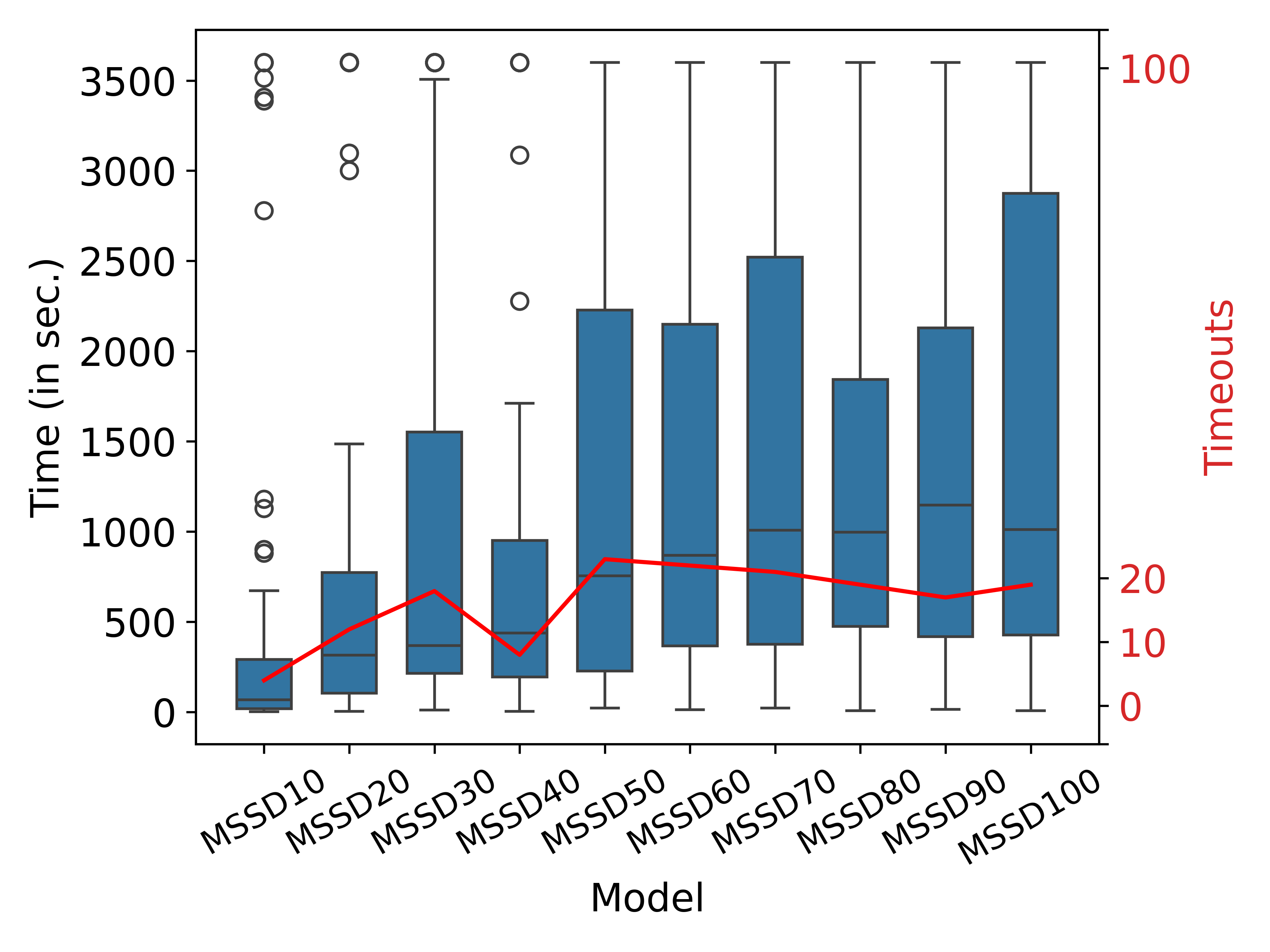}
\vspace{-6pt}
\caption{Summarized MSSD runtime results in seconds with timeouts for computing lower-bounded importance using encoding \lp.\label{fig:spotify_box}
}
\end{figure}

After confirming that computing strategies to minimize, respectively maximize, importance takes comparably long, we restricted the MSSD experiments to minimization.
\Cref{fig:spotify_box} presents the averaged runtime results for the MSSD models on the left-hand $y$-axis and the number of time-outs on the right-hand $y$-axis.
Most strategies were computed within a one-hour time constraint; the $4\unit{GB}$ memory constraint was reached in three MSSD10 problems.
While MSSD10--MSSD40 reported few timeouts, starting from MSSD50, up to $23$ experiments timed out.

\paragraph{Interpretability of our scores.}  Attribution-based explanations can support end-users in gaining insights into the decision process by highlighting the most crucial steps.
However, to be useful, attribution scores need to be understandable. Indeed, we observe that our technique is most informative when bounds on attribution scores are tight. In such cases, our attribution-based explanations lend themselves to a textual representation that summarizes  the most important states, respectively paths, with high lower bounds, and most detrimental states, respectively paths, with low upper bounds.
An excerpt of a textual representation of the most influential and detrimental states for receiving a loan in BPIC12 is:

{\begin{scriptsize}
\begin{verbatim}
States indispensable for a loan application include:
 A_CreateApplication, O_Accepted, and A_Pending.
States detrimental for a loan application include:
 SUPER LONG calls regarding incomplete information.
\end{verbatim}
\end{scriptsize}}

\noindent
Additionally, process representations where states and paths are colored with heat maps according to their importance can communicate attribution-based explanations well, as demonstrated in
\Cref{ex:gridworld} on a gridworld example.
\begin{example}
  \label{ex:gridworld}
  \Cref{fig:gridworld} shows a deterministic gridworld where the agent (red triangle) has to pick up a key, required to cross a river of deadly lava to reach the green target state.
  This gridworld can be modeled as an MDP where each state corresponds to an agent position, the target state is the state encoding the agent at the green tile position, and touching the lava leads to unsuccessful end states.
  The actions correspond to moving the agent in the 4-neighborhood, and picking up the key.
  The key is required to pass through the door, but no ``open'' action is needed.
  Without having the key, it is not possible to move on the door-field.

  Our importance scores enable answering questions such as:
  \emph{how important is visiting state $s$ for reaching the target?} (\textbf{P2}), or, 
  \emph{how important is following path $\tau$ 
  for reaching the target?} (\textbf{P4}).
  The importance of any state is depicted in \cref{fig:gridworld_overlay} via a heat map overlay.
  There exists a strategy that leads the agent via any non-lava state to the target state, and there exists no strategy that leads the agent via any lava state to the green state.
  However, there exists no successful strategy where the state across the lava receives an importance with a lower bound of less than $1$. 
\end{example}
\begin{figure}
  \centering
  \begin{subfigure}{0.49\linewidth}
    \centering
    \includegraphics[width=\linewidth]{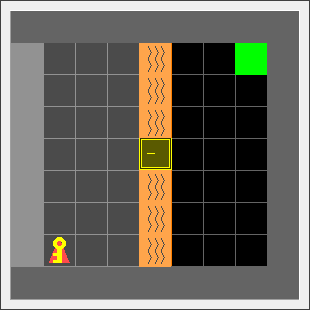}
    \caption{Gridworld Example.}
    \label{fig:gridworld}
  \end{subfigure}
  \begin{subfigure}{0.49\linewidth}
    \centering
    \includegraphics[width=\linewidth]{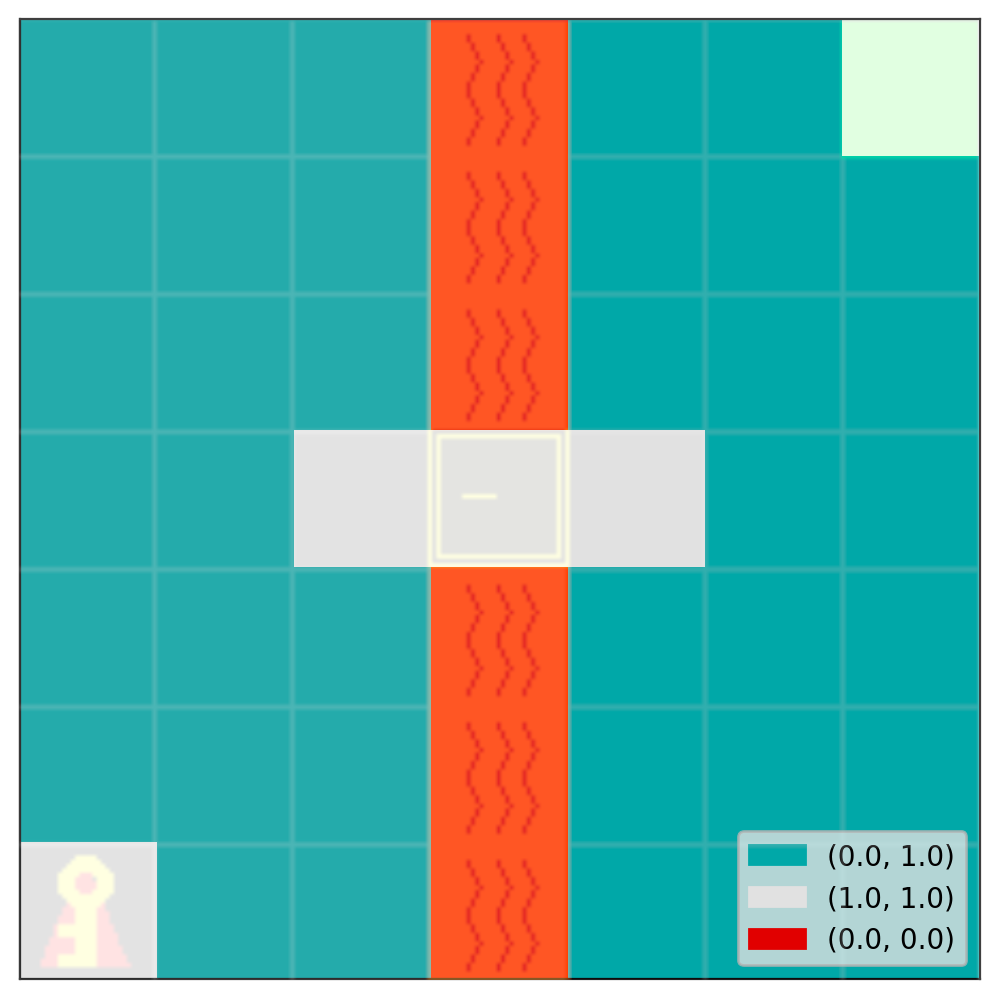}
    \caption{Gridworld Heat Map Overlay.}
    \label{fig:gridworld_overlay}
  \end{subfigure}
  \caption{Gridworld example, where the agent has to reach the green tile, while avoiding the lava tiles. 
  The heat map on the right visualizes the importance from the corresponding MDP and highlights that most states are optional.
  The three white colored transfer-states are indispensable, and the six red states should be avoided.}
  \label{fig:gridworld_example}
\end{figure}

\paragraph{Threats to validity.} The main threat to validity in our experiments is numerical
instability, related to the chosen constants for $M$ and $\epsilon$.
Due to its size, $M$ can introduce numerical inaccuracies as MDPs' transition probabilities might become very small,
spanning a large range of values in the model.  However, the
differences between the \lp and \qp encodings are small: on average,
results differ by less than $4 \cdot 10^{-4}$.  A table showing the
differences is included in Appendix~\ref{app:results}. In summary, attributive explanations can be computed efficiently for complex MDPs.
Notably, attributions are possible for strategies with optimal reachability in MDPs with up to $10\,000$ states and transitions. 






\section{Conclusion}
\label{sec:conclusion}

This paper introduces attribution-based explanations for Markov Decision Processes to address the limitations of static feature importance in sequential decision-making. We provide a formal characterization for attribution-based explanations in this context, including the assignment of importance scores to individual states as well as execution paths in the model. We also propose an optimization approach for computing these scores and demonstrate its scalability across a wide range of sequential decision-making problems.

Our work opens several interesting directions for future
work. First, we plan to investigate extensions of our framework towards computing importance of subpaths as opposed the full paths (from initial to goal states) considered here. To this aim, we will investigate a formal grammar capable of identifying semantically interesting segments within an execution. Second, 
we intend to explore the application of our techniques to Deep Reinforcement Learning. This will require novel abstraction methods to map latent representations learned by neural networks onto our formal attribution framework, bridging the gap between symbolic reasoning and neural decision-making. Finally, we believe it would be important to conduct user studies to evaluate the interpretability of our attribution scheme and assess how effectively these explanations help end users understand system behavior.



\bibliographystyle{named}
\bibliography{references}

\clearpage
\appendix

\section{Proofs for Theoretical Properties}

\subsection{Monotonicity}
\label{app:monotonicity}
\begin{lemma}
  \label{lem:app_greedily}
  Optimizing the importance $\impSigma{s}$ is monotonic in the reachability probability $\Pr_{\mdp^\strategy}(s)$, but is not monotonic in $\Pr_{\mdp^\strategy}(s')$ for all $s' \in \states\setminus\{s\}$. 
\end{lemma}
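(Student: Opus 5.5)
We read the statement as follows. $\Pr_{\mdp^\strategy}(s)$ is the probability of visiting $s$ before $t$ and then reaching $t$, that is, the numerator $\Pr_{\mdp^\sigma}(\cond{s})$ of \cref{eq:importance}. Monotonicity means that a local strategy modification which increases this probability, while keeping the same probability of reaching $t$, increases $\impSigma{s}$. Non-monotonicity in $\Pr_{\mdp^\strategy}(s')$ for $s'\neq s$ means that increasing the probability of reaching $t$ via some other state can move the importance of $s$ in either direction. The plan is to prove the positive part by a direct algebraic argument on the fraction, and the negative part by a small counterexample MDP. The counterexample is also what justifies the earlier remark that value iteration cannot be applied directly to the fraction.

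\textbf{Positive part.} We work in the preprocessed MDP, where paths reaching $t$ split into those reaching $t^\top$ (via $s$) and those reaching $t^\bot$ (bypassing $s$). Write $x=\Pr(\lozenge t^\top)$ and $y=\Pr(\lozenge t^\bot)$, so that $\impSigma{s}=x/(x+y)$. For fixed $y>0$ this function is strictly increasing in $x$, since $\partial_x\, x/(x+y)=y/(x+y)^2>0$. It is also nondecreasing in $x$ when the total $x+y$ is held fixed, which covers the reachability-optimal setting $\Sigma^*$, where the denominator equals $p^*$.

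One point needs care: a change of strategy that increases $x$ may also change $y$. We will therefore state the monotonicity for modifications that keep $y$ fixed, or that keep $x+y$ fixed. In both cases the claim reduces to the one-dimensional monotonicity above. Concretely, a modification at states in the $\top$-copy affects only $x$, so local improvements of $x$ made there are improvements of the importance.

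\textbf{Negative part.} We construct an explicit MDP. From $s_0$ there is one action $a$ leading to $s'$ and an alternative action. From $s'$ the process continues probabilistically either to $s$ and then to $t$, or directly to $t$ bypassing $s$. The choices are arranged so that two strategies $\sigma_1,\sigma_2$ satisfy $\Pr_{\sigma_1}(s')<\Pr_{\sigma_2}(s')$ but $\imp_{\sigma_1}(s)>\imp_{\sigma_2}(s)$, and a second pair exhibits the reverse inequality.

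For example, routing through $s'$ with $s'$ leading to $t$ without passing through $s$ increases $\Pr(s')$ and lowers the importance of $s$. If instead $s'$ leads mostly through $s$, increasing $\Pr(s')$ raises the importance of $s$. Choosing the probabilities explicitly (e.g.\ $0.5$ and $0.9$) and evaluating the fraction in each case shows that no monotone relation holds in general.

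The main obstacle is making precise what ``monotonic in $\Pr(s)$'' means, since the numerator and the denominator depend on the same strategy. We will fix this by the normalization argument above (fixing $y$, or fixing the total as in $\Sigma^*$). Once this is done the counterexample part is routine.
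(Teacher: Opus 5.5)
Your proposal matches the paper's proof. The positive part is the same derivative argument for $x/(x+y)$ in the numerator, and you make explicit, via the $\top$/$\bot$ split of the preprocessed MDP, that $y$ or $x+y$ must be held fixed, which the paper leaves implicit. The negative part is likewise an explicit counterexample MDP, playing the role of $\mdp_{\mathrm{ex1}}$ in \cref{ex:abs-relevance}, where switching $s_2$ to action $b$ lowers the probability through $s_2$ yet raises the importance of $s_1$ from $0.01$ to $1$. When you instantiate your own example, fix the alternative actions concretely (e.g.\ one action from $s_0$ going through $s$ and one going straight to $t$), so that both directions appear in a single MDP.
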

\begin{proof}
  The monotonicity of $\impSigma{s}$ in $\Pr_{\mdp^\strategy}(s)$ for $s \in \states \setminus \{s\}$ follows from the monotonicity of the derivative of $\impSigma{s}$.
 \Cref{ex:abs-relevance} demonstrates that $\impSigma{s}$ is not monotonic in $\Pr_{\mdp^\strategy}(s')$ for $s' \neq s$.
\end{proof}

\begin{figure}
  \centering
 \begin{subfigure}{\linewidth}  
   \centering
   \begin{tikzpicture}[auto,node distance=8mm,>=latex,font=\small]
  \def\centerarc[#1](#2)(#3:#4:#5)
  { \draw[#1] ($(#2)+({#5*cos(#3)},{#5*sin(#3)})$) arc (#3:#4:#5); }

  \tikzstyle{round}=[thick,draw=black,circle]

  \node[round] (s0) {$\sinit$};
  \node[round, right=4em of s0] (s1) {$s_1$};
  \node[round, right=4em of s1] (s2) {$s_2$};

  \node[round,above left=3em and 0em of s1] (tau2) {$\tau$};
  \node[round,above right=3em and 1.5em of s1] (st) {$s_t$};

  \draw[->] (s0) -- (s1) node[midway, below]{0.1};
  \draw[->] (s0) edge[bend right=40] node[midway, near start, below] {0.9} (s2);
  \centerarc[postaction={decorate, decoration={text align={center},raise={1mm}, text along path, text={a}}}](s0)(15:-55:0.5) node[midway];

  \draw[->] (s1) -- (st) node[midway, near start, right]{0.1};
  \draw[->] (s1) -- (tau2) node[midway, left]{0.9};
  \centerarc[postaction={decorate, decoration={text align={center},raise={1mm},text along path, text={a}}}](s1)(120:45:0.5) node[midway];

  \draw[->] (s2) -- (st) node[midway, near start, right]{1};
  \draw[->] (s2) edge[dashed] node[midway, near start, below, dashed]{1} (s1);

\end{tikzpicture}
   \subcaption{Example MDP $\mdp_\mathrm{ex1}$.}
   \label{fig:mdp_example}
 \end{subfigure}

   \begin{subfigure}{0.48\linewidth}
     \centering
     \begin{tabular}{|c|c|c|}\hline
       \diagbox{$\states$}{$\actions$} & a & b \\\hline
       $\sinit$ & 1 & 0 \\
       $s_1$ & 1 & 0 \\
       $s_2$ & 1 & 0 \\\hline
     \end{tabular}
     \subcaption{Initial strategy $\sigma_0$ for $\mdp_\mathrm{ex1}$.}
     \label{fig:initial_strat}
   \end{subfigure}
   \begin{subfigure}{0.48\linewidth}
     \centering
     \begin{tabular}{|c|c|c|}\hline
       \diagbox{$\states$}{$\actions$} & a & b \\\hline
       $\sinit$ & 1 & 0 \\
       $s_1$ & 1 & 0 \\
       $s_2$ & 0 & \bf 1 \\\hline
     \end{tabular}
     \subcaption{Updated strategy $\sigma_1$ for $s_2$.}
     \label{fig:strat_s0}
   \end{subfigure}
 \caption{Example for Non-monotone Sequence of Improvements.}
\end{figure}

\begin{example}\label{ex:abs-relevance}
  For optimizing reachability probabilities, locally optimal decisions
  are globally optimal, so optimal strategies can be computed greedily
  over individual reachability
  probabilities~\cite{forejt2011automated}.  Strategies optimizing the
  importance of individual states cannot be computed this way.

  Consider the MDP $\mdp_\mathrm{ex1}$ (see \cref{fig:mdp_example}).
  To compute a strategy $\sigma$ to maximize
  $\imp_{\mdp_{\mathrm{ex1}}^{\sigma}}^{\sinit, t}(s_1)$, it is not
  sufficient to consider local decisions to maximize the reachability
  probabilities.  The initial strategy $\strategy_0$ in
  \cref{fig:initial_strat} maximizes the probability of reaching $t$
  in every state.  The importance of $s_1$ under $\strategy_0$ is
  $\imp_{\mdp_{\mathrm{ex1}}^{\strategy_{0}}}^{s_0, t}(s_1) = 0.01$,
  and that of $s_2$ is
  $\imp_{\mdp_{\mathrm{ex1}}^{\strategy_{0}}}^{s_0, t}(s_2) = 0.99$.
  For the updated strategy $\strategy_1$ in \cref{fig:strat_s0}, the
  importance of $s_1$ becomes
  $\imp_{\mdp_{\mathrm{ex1}}^{\strategy_{1}}}^{s_0, t}(s_1) = 1$,
  while the probability of reaching $t$ from $s_2$ reduces from $1$
  to $0.1$, and the importance of $s_2$ to
  $\imp_{\mdp_{\mathrm{ex1}}^{\strategy_{1}}}^{s_0, t}(s_2) = 0.9$.
  Note how the strategy to maximize $\imp(s_2)$ is not (strictly)
  monotonic in either the importance or the reachability values of
  $\sinit$ or $s_1$.
\end{example}


\subsection{Encoding Properties}
\label{app:encoding_properties}
In the sequel, we let $P$ denote the optimization problem defined by Equations~\ref{eq:qp_target}--\ref{eq:qp_path}.
\begin{lemma}
  \label{lem:ap_optimal}
  Assume a solution $\nu$ to $P$, which maps each variable $v$ to value $\nu(v)$. 
  Further, let $\strategy$ be the strategy defined by $\strategy(s,a) = \nu(p_{sa})$, for all $\state \in \states$ and $\action \in \actions$.
  Then, the value of the objective function equals $\min_{\sigma \in \Sigma_\mdp^t} \imp_{\mdp^{\sigma}}^{\sinit, t}(\hat{s})$.
\end{lemma}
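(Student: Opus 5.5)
The plan is to show that the feasible solutions of $P$ correspond exactly, in value, to the strategies in $\Sigma_{\mdp^*}^t$ (lifted to the preprocessed MDP $\mdp$ with one memory bit). Once this correspondence is established, the minimum of the objective \cref{eq:qp_target} over feasible $\nu$ equals $\min_{\sigma} \imp_{\mdp^{\sigma}}^{\sinit,t}(\hat{s})$. The argument has two directions, each based on identifying the variables $p_{s,t^{\bowtie}}$ with true reachability probabilities in the induced Markov chain.

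\emph{Soundness (solution $\Rightarrow$ strategy value).} Fix a feasible $\nu$ and the induced $\sigma$. By Constraint~\ref{eq:qp_sum}, $\sigma$ is a well-defined stochastic memoryless strategy on $\mdp$, which is a one-bit-memory strategy on $\mdp^*$. Constraints~\ref{eq:qp_ptBB}--\ref{eq:qp_bellman} state that $(\nu(p_{s,t^{\bowtie}}))_s$ is a fixed point of the linear reachability operator of $\mdp^\sigma$ for target $t^{\bowtie}$. Such fixed points are in general not unique: on bottom SCCs not containing $t$, any constant is a fixed point. This is where Constraint~\ref{eq:qp_path} is used, and it is the main obstacle. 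Following \cite{wimmer2012minimal}, I will argue that the ordering values $\tau_s$ strictly increase in expectation along $\sigma$. A state with $\tau_s$ below the expected successor value must have a successor with larger $\tau$, so following such successors yields a strictly increasing chain. Since $\states$ is finite, this chain must end in $T$. Hence every non-terminal state reaches $T$ with positive probability, and no bottom SCC of $\mdp^\sigma$ lies outside $T$. Consequently, the states from which $t^{\bowtie}$ is unreachable are absorbed into terminal states other than $t^{\bowtie}$, where the equations force value $0$ (I need to check the handling of terminal states here). The reachability equation system then has a unique solution, namely the true probabilities $\Pr_{\mdp^\sigma}(\lozenge t^{\bowtie})$.

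By the preprocessing construction, $\Pr_{\mdp^\sigma}(\lozenge t^{\top})$ from $s_0^\bot$ equals $\Pr(\cond{\hat{s}})$ in $\mdp^*$ under the corresponding memory strategy, and the sum over both copies of $t$ equals $\Pr(\lozenge t)$. Constraint~\ref{eq:qp_lower_bound} ensures this strategy lies in $\Sigma^t$. The objective therefore equals $\imp_{{\mdp^*}^{\sigma}}^{\sinit,t}(\hat{s})$.

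\emph{Completeness (strategy $\Rightarrow$ feasible solution).} Given any $\sigma\in\Sigma^t$, I will assume without loss of generality that it has the one-bit form, citing the memory remark after \textbf{P2}. Set $p_{sa}=\sigma(s,a)$ and $p_{s,t^{\bowtie}}$ equal to the true reachability probabilities. Constraint~\ref{eq:qp_lower_bound} holds for $\epsilon$ small enough. It remains to construct $\tau_s$ satisfying Constraint~\ref{eq:qp_path}. If $\sigma$ has bottom SCCs avoiding $T$, I first modify $\sigma$ on those states, which reach $t$ with probability $0$, so that they lead to terminal states. This does not change the objective. I then define $\tau_s$ as a suitably scaled expected-steps-to-$T$ potential. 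The objective value equals the importance, so the optimum of $P$ is at most, and by soundness at least, the minimal importance.
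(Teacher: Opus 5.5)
Your two-direction plan matches the paper's sketch: you identify the $p_{s,t^{\bowtie}}$ with true reachability probabilities, use Constraint~\ref{eq:qp_path} to rule out spurious fixed points on non-terminal bottom SCCs, and build $\tau_s$ from the distance to $T$. Your potential $\tau_s=-\mathbb{E}_s[\text{steps to }T]$ is cleaner than the paper's induction on shortest-path length, since it satisfies Constraint~\ref{eq:qp_path} with equality.

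However, the completeness step ``modify $\sigma$ on bottom SCCs avoiding $T$ so that they lead to terminal states; this does not change the objective'' fails. The only exits from such an SCC may lead to $t$, possibly through $\hat{s}$, and then numerator and denominator both change. For a concrete case, let $s_0$ move with probability $\frac13$ each to $u$, $w$, $\hat{s}$, and let $w$ and $\hat{s}$ move to the absorbing target $t$. Give $u$ two actions: a self-loop $a$ and a move $b$ to $\hat{s}$. The strategy that loops at $u$ lies in $\Sigma^t$ with importance $\frac{1/3}{2/3}=\frac12$. In $P$, however, $u^\bot\notin T$, so Constraint~\ref{eq:qp_path} at $u^\bot$ is infeasible unless $p_{u^\bot b}>0$. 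Then $u^\bot$ reaches $t^\top$ almost surely, and every feasible point has objective $\frac{2/3}{1}=\frac23$. So this step cannot be repaired: the optimum of $P$ exceeds the minimal importance. The lemma holds only under an extra hypothesis such as ``under every strategy, all bottom SCCs of $\mdp^{\sigma}$ are terminal states''. The paper's sketch uses this tacitly when it constructs $\tau_s$ only for strategies that reach $T$ from every state. Under that hypothesis your modification step is unnecessary, and the rest of your completeness argument goes through.

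The point you flagged in the soundness direction is also a genuine gap. For an absorbing $r\in T\setminus\{t^\top,t^\bot\}$, Constraint~\ref{eq:qp_bellman} reads $p_{r,t^{\bowtie}}=p_{r,t^{\bowtie}}$. So nothing forces these values to $0$, and the reachability fixed point is not unique. Suppose $s_0$ moves with probability $\frac12$ each to $\hat{s}$ (which moves to $t$) and to an absorbing reject state $r$. The importance of $\hat{s}$ is then $1$ under every strategy, yet setting $p_{r^\bot,t^\bot}=1$ gives a feasible point of $P$ with objective $\frac12$. You need the boundary condition $p_{r,t^{\bowtie}}=0$ for all $r\in T\setminus\{t^{\bowtie}\}$. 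The paper's remark about restricting Constraint~\ref{eq:qp_bellman} to states that can reach $t^{\bowtie}$, and setting the remaining variables to $0$, supplies exactly this. With that condition and the hypothesis above, your argument proves the lemma.
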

\begin{proofsketch}
For this proof, we assume that $\epsilon$ is chosen small enough to not exclude any strategies.
Further, observe that $P$ computes the reachability for each state $s$ of reaching $t$ by visiting $\hat{s}$ and avoiding $\hat{s}$.
Constraint~\ref{eq:qp_bellman} computes the reachability of a state over all successor states, while Constraint~\ref{eq:qp_path} ensures that a terminal state is actually reached.
There exists a weight-decreasing assignment for all states satisfying Constraint~\ref{eq:qp_path} iff the corresponding strategy allows for a simple path $\tau$ reaching a terminal state with positive probability from every state.
We prove the claim via induction over the length of the shortest path reaching a terminal state from each state and the corresponding weight assignment.
With a length of $0$, the states are terminal states and can be assigned a very large constant, e.g., a multiple of the product over all transition probabilities.
For a length of $i>0$, all states reaching a terminal state in $i-1$-many steps allow for a valid assignment, where a possible assignment for $i$-many steps needs to be shown.
Let  $s \in \states$ be a state reaching a terminal state in $i$-many steps.
State $s$ reaches at least one state $s'$ which was covered in step $i-1$ and, thus, $\nu(\tau_{s'})$ is well-defined.
The weight for $s$ can now be set to any value satisfying $\tau_s + 1 \leq \sum_{a \in \actions(s)} \transitions(s,a,s') \cdot \tau_{s'}$.
Constraint~\ref{eq:qp_lower_bound} ensures that $P$ considers all strategies that reach $t$ with a positive reachability probability.
\end{proofsketch}

\begin{theorem}[Soundness and Completeness]
    $P$ is feasible
    iff the importance of state $\hat{s}$ is well defined.
\end{theorem}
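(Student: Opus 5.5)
The proof splits into the two directions of the equivalence. Throughout, ``the importance of $\hat{s}$ is well defined'' means $\Sigma^t_{\mdp^*}\neq\emptyset$: some (memoryless or finite-memory) strategy reaches $t$ with positive probability, which holds iff $s_0\in\Reach(t)$. As in the proof of \Cref{lem:ap_optimal}, we assume $\epsilon$ is small enough not to exclude any strategy. Concretely, we take $\epsilon$ at most the least positive reachability probability attained by a deterministic strategy on the preprocessed $\mdp$; this bound is positive because there are only finitely many such strategies.

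\emph{Soundness (feasible $\Rightarrow$ well defined).} Let $\nu$ be a feasible assignment and define $\strategy(s,a)=\nu(p_{sa})$. By Constraint~\ref{eq:qp_sum}, $\strategy$ is a well-defined memoryless strategy on $\mdp$, and hence a one-bit-memory strategy on $\mdp^*$. Constraint~\ref{eq:qp_bellman} states that the values $\nu(p_{s,t^{\bowtie}})$ are a fixed point of the reachability equations of $\mdp^\strategy$, with boundary values fixed by Constraints~\ref{eq:qp_ptBB}--\ref{eq:qp_ptBT}.

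The key step is to show that this fixed point is the \emph{least} one, i.e.\ the true reachability probabilities. For this we use Constraint~\ref{eq:qp_path}, invoking the argument already given in the proof of \Cref{lem:ap_optimal}: the ranking $\tau_s$ strictly increases in expectation along the strategy. A bottom SCC of $\mdp^\strategy$ containing no terminal state would be closed, and averaging the inequality over its stationary distribution yields $\sum\pi_s(\tau_s+1)\le\sum\pi_s\tau_s$, a contradiction. Hence every non-target bottom SCC consists of terminal states, where the unique solution sets $p_{\cdot,t^{\bowtie}}$ to $0$. With all such bottom SCCs pinned, the reachability equations have a unique solution.

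Therefore $\nu(p_{s_0^\bot,t^\bot})+\nu(p_{s_0^\bot,t^\top})=\Pr_{\mdp^\strategy}(\lozenge t)$, and this sum is at least $\epsilon>0$ by Constraint~\ref{eq:qp_lower_bound}. So $\strategy\in\Sigma^t_{\mdp^*}$, and the importance is well defined.

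\emph{Completeness (well defined $\Rightarrow$ feasible).} Suppose $\Sigma^t_{\mdp^*}\neq\emptyset$. Then $s_0$ can reach $t$ in the graph of $\mdp^*$, and we fix a deterministic strategy $\strategy$ that follows a shortest path towards $t$ on $\Reach(t)$. On the remaining states, $\strategy$ chooses actions that lead to terminal states, since the ranking must be satisfiable there. Lift $\strategy$ to $\mdp$, where the memory bit evolves deterministically with the transitions. We then set $p_{sa}$ to the indicator of $\strategy$'s choice and $p_{s,t^{\bowtie}}$ to the exact reachability probabilities in $\mdp^\strategy$. This choice satisfies Constraints~\ref{eq:qp_sum}--\ref{eq:qp_bellman}, and Constraint~\ref{eq:qp_lower_bound} holds by the choice of $\epsilon$.

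For Constraint~\ref{eq:qp_path}, we construct $\tau_s$ by the induction of the proof of \Cref{lem:ap_optimal}, on the length of the shortest path under $\strategy$ to a terminal state ($t^\bot$ and $t^\top$ are absorbing and therefore terminal). Terminal states get a large constant, and each non-terminal state gets a value satisfying the inequality given the already assigned successors.

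\emph{Main obstacle.} The hardest step is the existence of $\tau$ in the completeness direction. Constraint~\ref{eq:qp_path} demands a state ordering for \emph{every} non-terminal state, including states that do not reach $t$. We therefore need that under a suitable strategy every state reaches some absorbing self-loop state in $T$ with positive probability. We would first try to argue that the learned or preprocessed MDPs end in absorbing sinks. Failing that, we would use the remark that the constraint can be restricted to the states in $\Reach(t^\top)\cup\Reach(t^\bot)$, on which a strategy reaching $t$ with positive probability from every state exists by choosing shortest-path actions. On those states, the weights then decrease towards $t$ as required.
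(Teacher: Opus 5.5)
Your two-direction structure matches the paper's. A feasible $\nu$ yields a strategy whose reachability from $s_0^\bot$ is at least $\epsilon$. Conversely, a strategy in $\strategies^t_{\mdp^*}$ is extended by its reachability values and a ranking. Some ingredients are sharper than the paper's sketch: the explicit choice of $\epsilon$, the stationary-distribution argument against non-terminal bottom SCCs, and a shortest-path witness instead of the importance minimiser, which spares you from needing the minimum to be attained. Two steps, however, do not hold for $P$ as written.

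\textbf{Soundness.} Your bottom-SCC argument shows only that every bottom SCC other than $\{t^\bot\},\{t^\top\}$ is a single absorbing state $r\in T$. It does not fix the value there. For such $r$, Constraint~\ref{eq:qp_sum} forces the self-loop and Constraint~\ref{eq:qp_bellman} degenerates to $p_{r,t^{\bowtie}}=p_{r,t^{\bowtie}}$. So every value in $[0,1]$ is admissible, and ``the unique solution sets $p_{\cdot,t^{\bowtie}}$ to $0$'' is unjustified. In fact it fails. Suppose $s_0$ moves deterministically to an absorbing $r\neq t$ and $t$ is unreachable. Setting $\nu(p_{r',t^\bot})=1$ for the copy $r'$ of $r$ entered from $s_0^\bot$ propagates through Constraint~\ref{eq:qp_bellman} to $s_0^\bot$, so Constraint~\ref{eq:qp_lower_bound} holds. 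Constraint~\ref{eq:qp_path} is trivially satisfiable, so $P$ is feasible although $\strategies^t_{\mdp^*}=\emptyset$. You need the paper's remark read as part of $P$: $p_{s,t^{\bowtie}}=0$ for $s\notin\Reach(t^{\bowtie})$. With it, all bottom SCCs are pinned and your uniqueness argument goes through.

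\textbf{Completeness.} Your ``main obstacle'' is left open, and it cannot be closed without an extra hypothesis. By your own stationary-distribution argument (and its converse via expected hitting times), Constraint~\ref{eq:qp_path} over all of $\states\setminus T$ is satisfiable for a fixed strategy iff that strategy reaches $T$ almost surely from every state. Suppose $\mdp^*$ has a closed set of non-absorbing states, e.g.\ $c_1,c_2\neq\hat{s}$ with single actions moving deterministically to each other. Then $\tau_{c_1^\bot}+1\le\tau_{c_2^\bot}$ and $\tau_{c_2^\bot}+1\le\tau_{c_1^\bot}$ make $P$ infeasible, even if $s_0$ moves directly to $t$ and the importance is well defined.

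Neither escape you propose is a proof. The first is an empirical claim about the learned models. The second misreads the remark, which restricts Constraint~\ref{eq:qp_bellman}, not Constraint~\ref{eq:qp_path}; dropping ranking constraints proves the claim for a different problem. The paper's sketch tacitly makes the same assumption: its induction on the length of a shortest path to a terminal state presupposes that one exists from every state. So the fix is to state it: every state of $\mdp^*$ can reach an absorbing state.

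Under this hypothesis your witness works. States outside $\Reach(t)$ cannot enter $\Reach(t)$, so shortest-path actions towards $t$ on $\Reach(t)$ and towards $T$ elsewhere reach $T$ with positive probability from everywhere. For the ranking, avoid the induction ``given the already assigned successors'': successors of the chosen action need not be closer to $T$, so they are not all assigned yet. Instead take $\tau_s:=C-h_s$, where $h_s$ is the expected number of steps to $T$ under $\strategy$ and $C$ is the constant used on $T$. This is finite and satisfies Constraint~\ref{eq:qp_path} with equality.
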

\begin{proofsketch}
$\rightarrow$.
Let $\nu$ be a solution to $P$. 
The strategy $\strategy$ optimizing the importance is induced by variables $p_{sa}$, $\strategy(s)(a) = \nu(p_{sa})$ for all $s \in \states$ and $a \in \actions$.
Following \cref{lem:optimal}, this value is the importance of $\hat{s}$.

\noindent $\leftarrow$.
Let $x = \min_{\strategy \in \strategies^t_\mdp}\impSigma{\hat{s}}$ be the minimal importance of state $\hat{s}$, realized by strategy $\strategy$.
This strategy can be extended to a solution of $P$ by (1) instantiating variables $p_{sa}$ with $\strategy(s)(a)$, (2) solving the Bellman equations in Constraint~\ref{eq:qp_bellman} and storing the reachability values in $p_{s, s^{\bowtie}_t}$, and (3) computing state orderings for $\tau_s$ following the construction in \cref{lem:optimal}.
\end{proofsketch}

The following lemma connects the feasibility of the encoding to the reachability of the target state $t$.

\begin{lemma}
    The importance of state $\hat{s}$ is well defined iff $\strategies^t_\mdp~\neq~\emptyset$.
\end{lemma}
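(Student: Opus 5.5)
We prove both directions by unfolding what ``well defined'' means for the tuple $\imp_{\mdp}^{\sinit, t}(\hat{s})$. It is the pair of a minimum and a maximum of $\imp_{\mdp^{\strategy}}^{\sinit, t}(\hat{s})$ taken over $\strategy \in \strategies^t_\mdp$. Each single value $\imp_{\mdp^{\strategy}}^{\sinit, t}(\hat{s})$ is well defined exactly when its denominator $\Pr_{\mdp^\strategy}(\lozenge t)$ is positive, that is, exactly when $\strategy \in \strategies^t_\mdp$. So we read ``the importance of $\hat{s}$ is well defined'' as: the set over which the optimization ranges is non-empty, and the minimum and maximum are attained.

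\emph{($\rightarrow$)} If the importance is well defined, it is in particular realized by at least one strategy $\strategy$ for which $\imp_{\mdp^{\strategy}}^{\sinit, t}(\hat{s})$ is defined. Hence $\Pr_{\mdp^\strategy}(\lozenge t) > 0$, so $\strategy \in \strategies^t_\mdp$ and $\strategies^t_\mdp \neq \emptyset$.

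\emph{($\leftarrow$)} Assume $\strategies^t_\mdp \neq \emptyset$. Then every strategy in this set yields a value of $\imp_{\mdp^{\strategy}}^{\sinit, t}(\hat{s})$ in $[0,1]$, so the infimum and supremum exist. The remaining issue is attainment, and this is the step I expect to be the main obstacle, because $\strategies^t_\mdp$ is not closed. Stochastic memoryless strategies (on the preprocessed MDP with the one-bit memory) form a compact polytope, and the ratio $\Pr_{\mdp^\strategy}(\cond{\hat{s}})/\Pr_{\mdp^\strategy}(\lozenge t)$ is continuous wherever the denominator is positive. But along a sequence whose reachability of $t$ tends to $0$, the ratio may converge to a value that no strategy in $\strategies^t_\mdp$ realizes.

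To handle attainment I would first try two routes.
\begin{enumerate}
\item Fix the support of the strategy, i.e.\ the set of actions used with positive probability. For each fixed support, reachability of $t$ is either identically positive or identically zero. Within a support class where it is positive, the objective is a ratio of rational functions of the strategy probabilities. One then argues that a sequence whose denominator tends to $0$ can be replaced by a strategy with the same limiting ratio, by scaling the mixing probabilities toward the boundary of the support class while keeping the relevant actions enabled.
\item Alternatively, follow the paper's own encoding: impose the lower bound $\epsilon$ from Constraint~\ref{eq:qp_lower_bound}. The feasible set then becomes compact and the minimum is attained. This matches the proof sketch of \cref{lem:ap_optimal}, which assumes $\epsilon$ is small enough that no relevant strategy is excluded.
\end{enumerate}

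Finally, to connect with Theorem~\ref{thrm:sound_complete}, I would observe the following. Given $\strategies^t_\mdp \neq \emptyset$, the strategy from $(\leftarrow)$ extends to a feasible assignment of $P$: set the variables $p_{sa}$ from the strategy, obtain the values $p_{s,t^{\bowtie}}$ by solving the Bellman equations, and construct the ordering variables $\tau_s$ by the inductive argument in \cref{lem:ap_optimal}. Thus feasibility of $P$, well-definedness of the importance, and $\strategies^t_\mdp \neq \emptyset$ are all equivalent.
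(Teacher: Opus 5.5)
Your ($\rightarrow$) direction is the paper's argument. Your ($\leftarrow$) direction, however, has a genuine gap. You build attainment of the $\min$ and $\max$ into ``well defined'' and then never establish it.

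Your second route is circular. For every fixed $\epsilon>0$, the bound $\Pr_{\mdp^\strategy}(\lozenge t)\geq\epsilon$ discards members of $\strategies^t_\mdp$. Choosing $\epsilon$ ``small enough not to exclude any relevant strategy'' already presupposes that an optimizer exists whose reachability is bounded away from $0$, which is exactly the claim.

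Your first route is only a hope. Nothing in it uses the memory bit, so it applies verbatim to memoryless strategies on the MDP itself, and there attainment is false. Take states $\sinit,u,\hat{s},t,z$:
$\sinit$ moves to $u$; in $u$, action $a$ leads to $t$ and to $\hat{s}$ with probability $\tfrac12$ each, and action $b$ leads to an absorbing sink $z$; $\hat{s}$ returns to $u$; $t$ is absorbing.
With $x=\strategy(u,a)$, the set $\strategies^t_\mdp$ corresponds to $x\in(0,1]$. Then $\Pr_{\mdp^\strategy}(\lozenge t)=x/(2-x)$, and the probability of reaching $t$ without visiting $\hat{s}$ is $x/2$. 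Hence $\Pr_{\mdp^\strategy}(\lozenge t\land(\lnot t\U\hat{s}))=x^2/(2(2-x))$, and the importance of $\hat{s}$ is exactly $x/2$.

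The infimum $0$ is approached only as the denominator vanishes, and no strategy in $\strategies^t_\mdp$ attains it. The $\epsilon$-restricted problem returns $\epsilon/(1+\epsilon)$ for every $\epsilon\in(0,1]$. With one bit of memory (play $a$ before $\hat{s}$ and $b$ afterwards), the value $0$ is attained. So attainment genuinely depends on the strategy class and would need its own proof.

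Separately, your closing equivalence with feasibility of $P$ fails for a fixed $\epsilon$. $P$ is infeasible whenever $0<\max_\strategy\Pr_{\mdp^\strategy}(\lozenge t)<\epsilon$, even though $\strategies^t_\mdp\neq\emptyset$.

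The paper sidesteps attainment entirely: it reads ``well defined'' as ``the defining quotient has a nonzero denominator for some strategy''. Under that reading the lemma is essentially the definition of $\strategies^t_\mdp$. If $\strategies^t_\mdp=\emptyset$, every denominator $\Pr_{\mdp^\strategy}(\lozenge t)$ is $0$. Otherwise any $\strategy\in\strategies^t_\mdp$ yields a defined value in $[0,1]$.

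So your ($\rightarrow$) direction together with the first sentence of your ($\leftarrow$) is already the complete proof, and the attainment discussion should be dropped. If you want the stronger statement, restrict to the one-bit-memory strategies of the preprocessed MDP and supply an actual attainment argument; neither of your two routes provides one.
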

\begin{proof}
If there exists no strategy under which $t$ is reached, thus $\strategies^t_\mdp = \emptyset$, then the importance of $\hat{s}$, $\impSigma{\hat{s}}$, is not well defined for any $\strategy \in \strategies_\mdp$; the denominator always defaults to $0$.
However, if $\strategies^t_\mdp \neq \emptyset$, there exists a strategy $\strategy \in \strategies^t_\mdp$ under which $t$ is reached with non-zero probability, and the importance of state $\hat{s}$ is thus well defined.
\end{proof}

\subsection{Deterministic Strategies}
\label{app:deterministic_strategy}
\begin{lemma}
    \label{lem:app_pure_strat}
    For an MDP $\mdp$, target state $t$, state $s \in \states$ and simple path $\tau$, there exists a deterministic strategy optimizing $\imp_{\mdp^{\sigma}}^{\sinit, t}(s)$ and $\imp_{\mdp^{\sigma}}^{\sinit, t}(\tau)$ from the strategies in $\strategies^*_\mdp$.
\end{lemma}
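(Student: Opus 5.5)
Within $\strategies^*_\mdp$ the denominator $\Pr_{\mdp^\strategy}(\lozenge t)=p^*$ is the same constant for every strategy. Optimizing $\imp_{\mdp^{\sigma}}^{\sinit, t}(s)$ over $\strategies^*_\mdp$ is therefore the same as optimizing the numerator $\Pr_{\mdp^\sigma}(\cond{s})$, which is a plain reachability probability. This lets us reduce the claim to the classical fact that reachability objectives in finite MDPs admit deterministic memoryless optimal strategies, and the plan is to make that reduction precise.

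\textbf{Step 1: pass to the preprocessed MDP.} We work on the preprocessed MDP with the $\top/\bot$ copies. There, $\Pr_{\mdp^\sigma}(\cond{s})$ equals the probability of reaching $t^\top$ from $s_0^\bot$, and $\Pr_{\mdp^\sigma}(\lozenge t)$ equals the probability of reaching $\{t^\top,t^\bot\}$. Memoryless strategies on the product correspond to one-bit-memory strategies on the original MDP, and the paper already treats these as memoryless, so "deterministic" is understood on the product.

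\textbf{Step 2: restrict to the actions of optimal strategies.} We compute the optimal values $x(u)=\max_\sigma \Pr_u(\lozenge\{t^\top,t^\bot\})$ for every state $u$. We then restrict each state to its value-preserving actions $A^*(u)=\{a \mid x(u)=\sum_{u'}\transitions(u,a,u')x(u')\}$, obtaining a sub-MDP $\mdp'$. By standard MDP theory, a strategy is reachability-optimal iff it uses only actions in $A^*$ and, from every state with $x(u)>0$, it reaches the target set with probability $x(u)$. The second condition is the end-component issue: the strategy must not stay forever in an end component of $\mdp'$ that contains no target.

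\textbf{Step 3: encode the constraint as a reachability objective.} To remove that side condition, we collapse the target-free maximal end components of $\mdp'$ in the standard way, or equivalently add the constraint of reaching the targets with probability exactly $x$. After this, every strategy of the quotient MDP is reachability-optimal in $\mdp$. Maximizing or minimizing the probability of reaching $t^\top$ in this quotient MDP is then an ordinary reachability problem, which has a deterministic memoryless optimum. The standard tools for this are value iteration or the LP characterization together with the existence of optimal positional strategies for reachability, as in \cite{baier2008principles}.

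\textbf{Step 4: lift back.} We unfold the optimal deterministic strategy of the quotient into $\mdp$. Inside each collapsed component we fix, deterministically, a shortest-path exit strategy that leads to the chosen exit action. This yields a deterministic strategy in $\strategies^*_\mdp$ with the same value of $\Pr(\cond{s})$.

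\textbf{Paths.} For a simple path $\tau$ we use the Remark: we fix $\strategy(s_i)=a_i$, which is already deterministic, multiply by the constant $\prod_i\transitions(s_i,a_i,s_{i+1})$, and apply the state case to $s_n$. This needs a small argument that the prefix choices are compatible with $\strategies^*$ and with memory; if they are not, we add a path-position memory component to the product construction.

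\textbf{Main obstacle.} The hard part is Step 3 for the minimization direction. A deterministic strategy minimizing the probability of $t^\top$ inside $\mdp'$ might linger in a target-free end component, which would lose optimality of reaching $t$. The end-component quotient is exactly the tool that rules this out. The care needed is to show that every strategy of the quotient corresponds to a reachability-optimal strategy of $\mdp$, and conversely.
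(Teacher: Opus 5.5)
Your argument is sound, and it takes a genuinely different, much more explicit route than the paper. The paper's proof only cites the existence of \emph{some} deterministic strategy in $\Sigma^*$ and concludes at once that deterministic strategies suffice for optimizing the probability of visiting $s$ (resp.\ following $\tau$) inside $\Sigma^*$. It leaves implicit the step from ``a deterministic reachability-optimal strategy exists'' to ``the secondary objective over $\Sigma^*$ is optimized by a deterministic strategy''. You supply that step:
\begin{itemize}
\item the constant denominator reduces the problem to optimizing reachability of $t^\top$ over $\Sigma^*$;
\item restricting to value-preserving actions and collapsing the target-free end components (keeping only exit actions) gives a quotient in which \emph{every} strategy is reachability-optimal, so the side constraint disappears and positional optimality of plain min/max reachability applies;
\item attractor strategies inside the collapsed components lift the optimum back to a deterministic strategy of the one-bit product.
\end{itemize}
The paper's version buys brevity; yours buys an actual justification. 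It also makes explicit that ``deterministic'' is meant on the product, which is what the binary strategy variables of the linear encoding rely on.

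Some points to tighten:
\begin{itemize}
\item $\Sigma^*$ only demands optimality from $s_0$. State your characterization for states reachable under $\sigma$, or note that any $\sigma\in\Sigma^*$ can be changed off its reachable set into a globally optimal strategy without affecting the numerator.
\item State the converse direction explicitly: each $\sigma\in\Sigma^*$ projects to a (history-dependent) quotient strategy with the same numerator, which is then dominated by a positional one.
\item Make $t^\top,t^\bot$ absorbing; otherwise ``reach $t^\top$'' overcounts paths that pass $t^\bot$ first.
\item Turn value-$0$ target-free components into sinks, since they may have no exit actions.
\item The quotient is needed for maximization as well, not only minimization: a positional maximizer of $t^\top$ may idle in an end component from which only $t^\bot$ is reachable.
\item For paths, drop the ``fix $\sigma(s_i)=a_i$'' variant, which can clash with $\Sigma^*$, and use the product with a prefix-tracking automaton directly. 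The path formula constrains only states, so the numerator becomes reachability of $t$ with the prefix flag set, and the same reduction applies verbatim.
\end{itemize}
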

\begin{proof}
  Let $\mdp$ be an MDP, state $t \in \states$ be the target state, $s \in \states$ a state in $\mdp$, and $\tau =  \sinit, a_0, \dots, s_n$ a simple path.
  If $\strategies^t_\mdp \neq \emptyset$, there exists a deterministic strategy $\strategy$ in $\strategies^*_\mdp$~\cite{baier2008principles}.
  Thus, to optimize the probability of visiting $s$ among the strategies in $\strategies^*_\mdp$, it suffices to consider deterministic strategies.
  Further, to optimize the probability of following $\tau$, the deterministic strategy that optimizes that probability among $\strategies^*_\mdp$ is selected.
\end{proof}

\subsection{Default Values}
\begin{lemma}\label{lemma:app_rel}
  For all strategies $\sigma \in \Sigma^t$ holds $\imp_{\mdp^{\sigma}}^{\sinit, t}(\sinit) = \imp_{\mdp^{\sigma}}^{\sinit, t}(t) = 1$.
\end{lemma}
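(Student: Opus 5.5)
The plan is to show, for each of the two states, that the event $\lozenge t \land (\lnot t \U s)$ coincides with the event $\lozenge t$ on $\paths_{\mdp}(\sinit)$. Once the two events are equal, their probabilities under $\Pr_{\mdp^\sigma}$ are equal. Since $\sigma \in \Sigma^t$ gives $\Pr_{\mdp^\sigma}(\lozenge t) > 0$, the quotient in \cref{eq:importance} is well defined and equals $1$. In both cases the inclusion $\lozenge t \land (\lnot t \U s) \subseteq \lozenge t$ is trivial, so only the reverse inclusion needs an argument.

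For $s = \sinit$, take any infinite path $\pi = s_0, a_0, s_1, \dots \in \paths_{\mdp}(\sinit)$ with $\pi \models \lozenge t$. Its first state is $\sinit$, so position $0$ witnesses $\lnot t \U \sinit$: the until-formula needs its right operand to hold at some position $j$ and its left operand at all positions $i < j$. Taking $j = 0$ makes the left requirement vacuous. Hence $\pi \models \lozenge t \land (\lnot t \U \sinit)$.

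For $s = t$, take again any $\pi$ with $\pi \models \lozenge t$, and let $j$ be the \emph{least} index with $s_j = t$, which exists because $\pi \models \lozenge t$. Then $s_j \models t$ and $s_i \models \lnot t$ for all $i < j$, by minimality of $j$. So $\pi \models \lnot t \U t$, and therefore $\pi \models \lozenge t \land (\lnot t \U t)$. Equivalently, $\lnot t \U t \equiv \lozenge t$ as LTL formulas.

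There is no real obstacle. The only points needing care are:
\begin{itemize}
\item the until is the strict (non-strict-right) until, so a witness at position $0$ is allowed;
\item choosing the first occurrence of $t$ in the second case;
\item measurability, which holds because both events are standard $\omega$-regular and hence measurable in the cylinder $\sigma$-algebra associated with $\mdp^\sigma$;
\item the positivity of the denominator, which is exactly what membership in $\Sigma^t$ supplies.
\end{itemize}
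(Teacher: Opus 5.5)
Your proposal is correct and follows the paper's proof: both unfold the LTL semantics, using that position $0$ witnesses $\lnot t \U \sinit$ and that $\lnot t \U t$ is equivalent to $\lozenge t$; you simply make the path-wise event equality and the choice of the first occurrence of $t$ explicit. One small terminological point: the until you correctly use is the paper's standard reflexive one, usually called the \emph{non-strict} until, so the label ``strict'' in your first bullet is a misnomer rather than a gap.
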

\begin{proof}
  \Cref{lemma:app_rel} follows from unfolding $\Pr_{\mdp^\sigma}(\cond{s})$ following LTL definitions~\cite{pnueli1977temporal}:
  $\Pr_{\mdp^\sigma}(\cond{\sinit}) = P_{\mdp^\sigma}(\lozenge t \land (\lnot t \U \sinit)) = P_{\mdp^\sigma}(\lozenge t)$, holds as $\sinit$ is the initial state of $\mdp$, and consequently, $\imp_{\mdp^{\sigma}}^{\sinit, t}(\sinit) = 1$. 

  $P_{\mdp^\sigma}(\cond{t}) = 
  P_{\mdp^\sigma}(\lnot t \U t)$. As $t$ is a unique state, $\lnot t \U t$ reduces to $\lozenge t$ and thus $\imp_{\mdp^{\sigma}}^{\sinit, t}(t) = 1$.
\end{proof}

\section{Additional Results}
We here expand on the results of the experiments reported in the paper, i.e., the different runtimes for minimization and maximization in \Cref{tab:app_greps_bpic_seperated}, as well as all results for Epidemic, \Cref{tab:app_epidemic} and \Cref{tab:app_epidemic_seperated}.
\Cref{tab:greps_bpic_differences} compares the differences between the importance computed with \qp and with \lp, highlighting that up to solver accuracy, both encodings compute comparable results.
We elaborate on the solver accuracy in our threats to validity.

\subsection{Additional Experimental Results}
\label{app:results}
\begin{table}[h]
    \centering
    \small
    \setlength{\tabcolsep}{2pt}
    \begin{tabular}{llrrrrrrrr}
    \toprule
    \textbf{Model} & \textbf{Enc.} & \textbf{Sense} & \textbf{Mean(t)} & \textbf{Std(t)} & \textbf{Min(t)} & \textbf{Max(t)} & \textbf{Opt.} & \textbf{T.O.} \\
    \midrule
    \multirow[t]{6}{*}{\textbf{Greps}} & \multirow[t]{2}{*}{\textbf{\gqp}} & \textbf{min} & 0.07 & 0.08 & 0.00 & 0.64 & 64 & 0 \\
    \textbf{} & \textbf{} & \textbf{max} & 0.04 & 0.03 & 0.00 & 0.14 & 64 & 0 \\
    \textbf{} & \multirow[t]{2}{*}{\textbf{\qp}} & \textbf{min} & 0.21 & 0.13 & 0.00 & 0.73 & 64 & 0 \\
    \textbf{} & \textbf{} & \textbf{max} & 0.13 & 0.12 & 0.00 & 0.70 & 64 & 0 \\
    \textbf{} & \multirow[t]{2}{*}{\textbf{\lp}} & \textbf{min} & 0.04 & 0.03 & 0.00 & 0.19 & 64 & 0 \\
    \textbf{} & \textbf{} & \textbf{max} & 0.03 & 0.02 & 0.00 & 0.10 & 64 & 0 \\
    \multirow[t]{6}{*}{\textbf{BPIC12}} & \multirow[t]{2}{*}{\textbf{\gqp}} & \textbf{min} & 930.11 & 1081.63 & 0.02 & T.O. & 119 & 10 \\
    \textbf{} & \textbf{} & \textbf{max} & 2303.72 & 1386.69 & 0.02 & T.O. & 73 & 56 \\
    \textbf{} & \multirow[t]{2}{*}{\textbf{\qp}} & \textbf{min} & 5.51 & 16.51 & 0.01 & 102.30 & 129 & 0 \\
    \textbf{} & \textbf{} & \textbf{max} & 0.39 & 0.97 & 0.01 & 9.71 & 129 & 0 \\
    \textbf{} & \multirow[t]{2}{*}{\textbf{\lp}} & \textbf{min} & 0.03 & 0.02 & 0.01 & 0.11 & 129 & 0 \\
    \textbf{} & \textbf{} & \textbf{max} & 0.02 & 0.00 & 0.00 & 0.03 & 129 & 0 \\
    \multirow[t]{6}{*}{\textbf{BPIC17}} & \multirow[t]{2}{*}{\textbf{\gqp}} & \textbf{min} & 5.16 & 12.46 & 0.03 & 65.05 & 158 & 0 \\
    \textbf{} & \textbf{} & \textbf{max} & 41.39 & 295.54 & 0.03 & T.O. & 157 & 1 \\
    \textbf{} & \multirow[t]{2}{*}{\textbf{\qp}} & \textbf{min} & 2.34 & 5.01 & 0.02 & 37.49 & 158 & 0 \\
    \textbf{} & \textbf{} & \textbf{max} & 2.99 & 5.48 & 0.03 & 36.49 & 158 & 0 \\
    \textbf{} & \multirow[t]{2}{*}{\textbf{\lp}} & \textbf{min} & 0.06 & 0.04 & 0.01 & 0.19 & 158 & 0 \\
    \textbf{} & \textbf{} & \textbf{max} & 0.08 & 0.06 & 0.01 & 0.77 & 158 & 0 \\
    \bottomrule
    \end{tabular}
    \caption{Averaged GrepS and BPIC runtime results in seconds for computing importance.}
    \label{tab:app_greps_bpic_seperated}
\end{table} 
\begin{table}
    \centering
    \small
    \setlength{\tabcolsep}{2pt}
    \begin{tabular}{llrrrr}
    \toprule
    \textbf{Model} & \textbf{Sense} & \textbf{Mean(t)} & \textbf{Std(t)} & \textbf{Min(t)} & \textbf{Max(t)}\\
    \midrule
    \multirow[t]{2}{*}{\textbf{Greps}} & \textbf{min} & 1.16e-16 & 2.94e-16 & 0 & 1.33e-15 \\
    \textbf{} & \textbf{max} & 1.10e-16 & 3.05e-16 & 0 & 1.67e-15 \\
    \multirow[t]{2}{*}{\textbf{BPIC12}} & \textbf{min} & 0 & 0 & 0 & 0 \\
    \textbf{} & \textbf{max} & 4.27e-08 & 4.85e-07 & 0 & 5.51e-06 \\
    \multirow[t]{2}{*}{\textbf{BPIC17}} & \textbf{min} & 3.88e-04 & 2.81e-03 & 0 & 3.33e-02 \\
    \textbf{} & \textbf{max} & 7.78e-05 & 3.32e-04 & 0 & 3.69e-03 \\
    \bottomrule
    \end{tabular}
    \caption{Averaged absolute differences between \lp and \qp.}
    \label{tab:greps_bpic_differences}
\end{table}

\begin{table}
    \centering
    \small
    \setlength{\tabcolsep}{2pt}
    \begin{tabular}{llrrrrrrrr}
    \toprule
    \textbf{Model} & \textbf{Enc.} & \textbf{Mean(t)} & \textbf{Std(t)} & \textbf{Min(t)} & \textbf{Max(t)} & \textbf{Opt.} & \textbf{T.O.} & \textbf{M.O.} \\
    \midrule
    \multirow[t]{2}{*}{\textbf{E4}} & \textbf{\qp} & 0.09 & 0.09 & 0.01 & 0.26 & 18 & 0 & 0 \\
    \textbf{} & \textbf{\lp} & 0.01 & 0.01 & 0.00 & 0.03 & 18 & 0 & 0 \\
    \hline
    \multirow[t]{2}{*}{\textbf{E5}} & \textbf{\qp} & 0.19 & 0.21 & 0.01 & 0.53 & 22 & 0 & 0 \\
    \textbf{} & \textbf{\lp} & 0.02 & 0.04 & 0.00 & 0.19 & 22 & 0 & 0 \\
    \hline
    \multirow[t]{2}{*}{\textbf{E6}} & \textbf{\qp} & 0.65 & 0.78 & 0.01 & 2.52 & 26 & 0 & 0 \\
    \textbf{} & \textbf{\lp} & 0.02 & 0.02 & 0.00 & 0.07 & 26 & 0 & 0 \\
    \hline
    \multirow[t]{2}{*}{\textbf{E7}} & \textbf{\qp} & 2.28 & 3.17 & 0.01 & 12.10 & 30 & 0 & 0 \\
    \textbf{} & \textbf{\lp} & 0.01 & 0.00 & 0.00 & 0.01 & 30 & 0 & 0 \\
    \hline
    \multirow[t]{2}{*}{\textbf{E8}} & \textbf{\qp} & 648.31 & 1387.09 & 0.01 & T.O. & 28 & 6 & 0 \\
    \textbf{} & \textbf{\lp} & 0.01 & 0.00 & 0.00 & 0.01 & 34 & 0 & 0 \\
    \hline
    \multirow[t]{2}{*}{\textbf{E9}} & \textbf{\qp} & 687.61 & 1403.24 & 0.02 & T.O. & 31 & 7 & 0 \\
    \textbf{} & \textbf{\lp} & 0.01 & 0.00 & 0.00 & 0.01 & 38 & 0 & 0 \\
    \hline
    \multirow[t]{2}{*}{\textbf{E10}} & \textbf{\qp} & 1399.82 & 1751.83 & 0.02 & T.O. & 26 & 16 & 0 \\
    \textbf{} & \textbf{\lp} & 0.01 & 0.00 & 0.01 & 0.02 & 42 & 0 & 0 \\
    \hline
    \multirow[t]{2}{*}{\textbf{E11}} & \textbf{\qp} & 1654.71 & 1804.20 & 0.03 & T.O. & 25 & 21 & 0 \\
    \textbf{} & \textbf{\lp} & 0.01 & 0.01 & 0.01 & 0.02 & 46 & 0 & 0 \\
    \hline
    \multirow[t]{2}{*}{\textbf{E12}} & \textbf{\qp} & 1728.06 & 1816.83 & 0.04 & T.O. & 26 & 24 & 0 \\
    \textbf{} & \textbf{\lp} & 0.02 & 0.01 & 0.01 & 0.03 & 50 & 0 & 0 \\
    \hline
    \multirow[t]{2}{*}{\textbf{E13}} & \textbf{\qp} & 1733.40 & 1815.66 & 0.05 & T.O. & 28 & 26 & 0 \\
    \textbf{} & \textbf{\lp} & 0.02 & 0.01 & 0.01 & 0.04 & 54 & 0 & 0 \\
    \hline
    \multirow[t]{2}{*}{\textbf{E14}} & \textbf{\qp} & 1738.01 & 1814.65 & 0.06 & T.O. & 30 & 28 & 0 \\
    \textbf{} & \textbf{\lp} & 0.03 & 0.02 & 0.01 & 0.06 & 58 & 0 & 0 \\
    \hline
    \multirow[t]{2}{*}{\textbf{E15}} & \textbf{\qp} & 1742.02 & 1813.74 & 0.08 & T.O. & 32 & 30 & 0 \\
    \textbf{} & \textbf{\lp} & 0.04 & 0.02 & 0.01 & 0.08 & 62 & 0 & 0 \\
    \hline
    \multirow[t]{2}{*}{\textbf{E16}} & \textbf{\qp} & 1745.57 & 1812.96 & 0.10 & T.O. & 34 & 32 & 0 \\
    \textbf{} & \textbf{\lp} & 0.06 & 0.03 & 0.01 & 0.14 & 66 & 0 & 0 \\
    \hline
    \multirow[t]{2}{*}{\textbf{E17}} & \textbf{\qp} & 1748.70 & 1812.25 & 0.12 & T.O. & 36 & 34 & 0 \\
    \textbf{} & \textbf{\lp} & 0.04 & 0.02 & 0.01 & 0.09 & 70 & 0 & 0 \\
    \hline
    \multirow[t]{2}{*}{\textbf{E18}} & \textbf{\qp} & 1751.51 & 1811.62 & 0.14 & T.O. & 38 & 36 & 0 \\
    \textbf{} & \textbf{\lp} & 0.04 & 0.02 & 0.02 & 0.09 & 74 & 0 & 0 \\
    \hline
    \multirow[t]{2}{*}{\textbf{E19}} & \textbf{\qp} & 1753.99 & 1811.01 & 0.16 & T.O. & 40 & 38 & 0 \\
    \textbf{} & \textbf{\lp} & 0.05 & 0.03 & 0.02 & 0.11 & 78 & 0 & 0 \\
    \bottomrule
    \end{tabular}
    \caption{Averaged Epidemic runtime results in seconds for computing importance.}
    \label{tab:app_epidemic}
\end{table} 

\begin{table*}
    \centering
    \small
    \setlength{\tabcolsep}{2pt}
    \begin{tabular}{llrrrrrrrr}
    \toprule
    \textbf{Model} & \textbf{Enc.} & \textbf{Sense} & \textbf{Mean(t)} & \textbf{Std(t)} & \textbf{Min(t)} & \textbf{Max(t)} & \textbf{Opt.} & \textbf{T.O.} & \textbf{M.O.} \\
    \midrule
    \multirow[c]{4}{*}{\textbf{Epidemic4}} & \multirow[c]{2}{*}{\textbf{\qp}} & \textbf{min} & 0.09 & 0.10 & 0.01 & 0.23 & 9 & 0 & 0 \\
    \textbf{} & \textbf{} & \textbf{max} & 0.08 & 0.08 & 0.01 & 0.26 & 9 & 0 & 0 \\
    \cline{2-10}
    \textbf{} & \multirow[c]{2}{*}{\textbf{\lp}} & \textbf{min} & 0.01 & 0.01 & 0.00 & 0.03 & 9 & 0 & 0 \\
    \textbf{} & \textbf{} & \textbf{max} & 0.01 & 0.01 & 0.00 & 0.02 & 9 & 0 & 0 \\
    \hline
    \multirow[c]{4}{*}{\textbf{Epidemic5}} & \multirow[c]{2}{*}{\textbf{\qp}} & \textbf{min} & 0.16 & 0.17 & 0.01 & 0.43 & 11 & 0 & 0 \\
    \textbf{} & \textbf{} & \textbf{max} & 0.23 & 0.24 & 0.01 & 0.53 & 11 & 0 & 0 \\
    \cline{2-10}
    \textbf{} & \multirow[c]{2}{*}{\textbf{\lp}} & \textbf{min} & 0.01 & 0.01 & 0.00 & 0.02 & 11 & 0 & 0 \\
    \textbf{} & \textbf{} & \textbf{max} & 0.03 & 0.05 & 0.00 & 0.19 & 11 & 0 & 0 \\
    \hline
    \multirow[c]{4}{*}{\textbf{Epidemic6}} & \multirow[c]{2}{*}{\textbf{\qp}} & \textbf{min} & 0.82 & 0.99 & 0.01 & 2.52 & 13 & 0 & 0 \\
    \textbf{} & \textbf{} & \textbf{max} & 0.48 & 0.48 & 0.02 & 1.18 & 13 & 0 & 0 \\
    \cline{2-10}
    \textbf{} & \multirow[c]{2}{*}{\textbf{\lp}} & \textbf{min} & 0.02 & 0.02 & 0.00 & 0.06 & 13 & 0 & 0 \\
    \textbf{} & \textbf{} & \textbf{max} & 0.03 & 0.03 & 0.01 & 0.07 & 13 & 0 & 0 \\
    \hline
    \multirow[c]{4}{*}{\textbf{Epidemic7}} & \multirow[c]{2}{*}{\textbf{\qp}} & \textbf{min} & 3.02 & 4.08 & 0.03 & 12.10 & 15 & 0 & 0 \\
    \textbf{} & \textbf{} & \textbf{max} & 1.54 & 1.72 & 0.01 & 3.78 & 15 & 0 & 0 \\
    \cline{2-10}
    \textbf{} & \multirow[c]{2}{*}{\textbf{\lp}} & \textbf{min} & 0.00 & 0.00 & 0.00 & 0.01 & 15 & 0 & 0 \\
    \textbf{} & \textbf{} & \textbf{max} & 0.01 & 0.00 & 0.01 & 0.01 & 15 & 0 & 0 \\
    \hline
    \multirow[c]{4}{*}{\textbf{Epidemic8}} & \multirow[c]{2}{*}{\textbf{\qp}} & \textbf{min} & 437.43 & 1190.54 & 0.02 & T.O. & 15 & 2 & 0 \\
    \textbf{} & \textbf{} & \textbf{max} & 859.18 & 1567.29 & 0.01 & T.O. & 13 & 4 & 0 \\
    \cline{2-10}
    \textbf{} & \multirow[c]{2}{*}{\textbf{\lp}} & \textbf{min} & 0.01 & 0.00 & 0.00 & 0.01 & 17 & 0 & 0 \\
    \textbf{} & \textbf{} & \textbf{max} & 0.01 & 0.00 & 0.00 & 0.01 & 17 & 0 & 0 \\
    \hline
    \multirow[c]{4}{*}{\textbf{Epidemic9}} & \multirow[c]{2}{*}{\textbf{\qp}} & \textbf{min} & 597.52 & 1336.78 & 0.02 & T.O. & 16 & 3 & 0 \\
    \textbf{} & \textbf{} & \textbf{max} & 777.69 & 1497.81 & 0.02 & T.O. & 15 & 4 & 0 \\
    \cline{2-10}
    \textbf{} & \multirow[c]{2}{*}{\textbf{\lp}} & \textbf{min} & 0.01 & 0.00 & 0.00 & 0.01 & 19 & 0 & 0 \\
    \textbf{} & \textbf{} & \textbf{max} & 0.01 & 0.00 & 0.00 & 0.01 & 19 & 0 & 0 \\
    \hline
    \multirow[c]{4}{*}{\textbf{Epidemic10}} & \multirow[c]{2}{*}{\textbf{\qp}} & \textbf{min} & 1418.69 & 1762.74 & 0.02 & T.O. & 13 & 8 & 0 \\
    \textbf{} & \textbf{} & \textbf{max} & 1380.96 & 1784.16 & 0.02 & T.O. & 13 & 8 & 0 \\
    \cline{2-10}
    \textbf{} & \multirow[c]{2}{*}{\textbf{\lp}} & \textbf{min} & 0.01 & 0.01 & 0.01 & 0.02 & 21 & 0 & 0 \\
    \textbf{} & \textbf{} & \textbf{max} & 0.01 & 0.00 & 0.01 & 0.02 & 21 & 0 & 0 \\
    \hline
    \multirow[c]{4}{*}{\textbf{Epidemic11}} & \multirow[c]{2}{*}{\textbf{\qp}} & \textbf{min} & 1587.64 & 1807.74 & 0.03 & T.O. & 13 & 10 & 0 \\
    \textbf{} & \textbf{} & \textbf{max} & 1721.78 & 1838.72 & 0.03 & T.O. & 12 & 11 & 0 \\
    \cline{2-10}
    \textbf{} & \multirow[c]{2}{*}{\textbf{\lp}} & \textbf{min} & 0.01 & 0.01 & 0.01 & 0.02 & 23 & 0 & 0 \\
    \textbf{} & \textbf{} & \textbf{max} & 0.01 & 0.00 & 0.01 & 0.02 & 23 & 0 & 0 \\
    \hline
    \multirow[c]{4}{*}{\textbf{Epidemic12}} & \multirow[c]{2}{*}{\textbf{\qp}} & \textbf{min} & 1728.06 & 1835.66 & 0.04 & T.O. & 13 & 12 & 0 \\
    \textbf{} & \textbf{} & \textbf{max} & 1728.06 & 1835.66 & 0.04 & T.O. & 13 & 12 & 0 \\
    \cline{2-10}
    \textbf{} & \multirow[c]{2}{*}{\textbf{\lp}} & \textbf{min} & 0.01 & 0.00 & 0.01 & 0.03 & 25 & 0 & 0 \\
    \textbf{} & \textbf{} & \textbf{max} & 0.02 & 0.00 & 0.01 & 0.03 & 25 & 0 & 0 \\
    \hline
    \multirow[c]{4}{*}{\textbf{Epidemic13}} & \multirow[c]{2}{*}{\textbf{\qp}} & \textbf{min} & 1733.40 & 1833.04 & 0.05 & T.O. & 14 & 13 & 0 \\
    \textbf{} & \textbf{} & \textbf{max} & 1733.39 & 1833.03 & 0.05 & T.O. & 14 & 13 & 0 \\
    \cline{2-10}
    \textbf{} & \multirow[c]{2}{*}{\textbf{\lp}} & \textbf{min} & 0.01 & 0.01 & 0.01 & 0.03 & 27 & 0 & 0 \\
    \textbf{} & \textbf{} & \textbf{max} & 0.03 & 0.00 & 0.02 & 0.04 & 27 & 0 & 0 \\
    \hline
    \multirow[c]{4}{*}{\textbf{Epidemic14}} & \multirow[c]{2}{*}{\textbf{\qp}} & \textbf{min} & 1738.01 & 1830.78 & 0.06 & T.O. & 15 & 14 & 0 \\
    \textbf{} & \textbf{} & \textbf{max} & 1738.01 & 1830.77 & 0.06 & T.O. & 15 & 14 & 0 \\
    \cline{2-10}
    \textbf{} & \multirow[c]{2}{*}{\textbf{\lp}} & \textbf{min} & 0.01 & 0.01 & 0.01 & 0.03 & 29 & 0 & 0 \\
    \textbf{} & \textbf{} & \textbf{max} & 0.04 & 0.01 & 0.03 & 0.06 & 29 & 0 & 0 \\
    \hline
    \multirow[c]{4}{*}{\textbf{Epidemic15}} & \multirow[c]{2}{*}{\textbf{\qp}} & \textbf{min} & 1742.01 & 1828.79 & 0.08 & T.O. & 16 & 15 & 0 \\
    \textbf{} & \textbf{} & \textbf{max} & 1742.02 & 1828.79 & 0.08 & T.O. & 16 & 15 & 0 \\
    \cline{2-10}
    \textbf{} & \multirow[c]{2}{*}{\textbf{\lp}} & \textbf{min} & 0.04 & 0.03 & 0.01 & 0.08 & 31 & 0 & 0 \\
    \textbf{} & \textbf{} & \textbf{max} & 0.05 & 0.01 & 0.02 & 0.08 & 31 & 0 & 0 \\
    \hline
    \multirow[c]{4}{*}{\textbf{Epidemic16}} & \multirow[c]{2}{*}{\textbf{\qp}} & \textbf{min} & 1745.59 & 1827.08 & 0.10 & T.O. & 17 & 16 & 0 \\
    \textbf{} & \textbf{} & \textbf{max} & 1745.56 & 1827.06 & 0.10 & T.O. & 17 & 16 & 0 \\
    \cline{2-10}
    \textbf{} & \multirow[c]{2}{*}{\textbf{\lp}} & \textbf{min} & 0.06 & 0.03 & 0.01 & 0.14 & 33 & 0 & 0 \\
    \textbf{} & \textbf{} & \textbf{max} & 0.06 & 0.02 & 0.02 & 0.09 & 33 & 0 & 0 \\
    \hline
    \multirow[c]{4}{*}{\textbf{Epidemic17}} & \multirow[c]{2}{*}{\textbf{\qp}} & \textbf{min} & 1748.70 & 1825.53 & 0.12 & T.O. & 18 & 17 & 0 \\
    \textbf{} & \textbf{} & \textbf{max} & 1748.70 & 1825.53 & 0.12 & T.O. & 18 & 17 & 0 \\
    \cline{2-10}
    \textbf{} & \multirow[c]{2}{*}{\textbf{\lp}} & \textbf{min} & 0.03 & 0.02 & 0.01 & 0.07 & 35 & 0 & 0 \\
    \textbf{} & \textbf{} & \textbf{max} & 0.06 & 0.01 & 0.05 & 0.09 & 35 & 0 & 0 \\
    \hline
    \multirow[c]{4}{*}{\textbf{Epidemic18}} & \multirow[c]{2}{*}{\textbf{\qp}} & \textbf{min} & 1751.51 & 1824.16 & 0.14 & T.O. & 19 & 18 & 0 \\
    \textbf{} & \textbf{} & \textbf{max} & 1751.50 & 1824.16 & 0.14 & T.O. & 19 & 18 & 0 \\
    \cline{2-10}
    \textbf{} & \multirow[c]{2}{*}{\textbf{\lp}} & \textbf{min} & 0.02 & 0.01 & 0.02 & 0.08 & 37 & 0 & 0 \\
    \textbf{} & \textbf{} & \textbf{max} & 0.06 & 0.01 & 0.05 & 0.09 & 37 & 0 & 0 \\
    \hline
    \multirow[c]{4}{*}{\textbf{Epidemic19}} & \multirow[c]{2}{*}{\textbf{\qp}} & \textbf{min} & 1753.99 & 1822.90 & 0.16 & T.O. & 20 & 19 & 0 \\
    \textbf{} & \textbf{} & \textbf{max} & 1753.98 & 1822.88 & 0.17 & T.O. & 20 & 19 & 0 \\
    \cline{2-10}
    \textbf{} & \multirow[c]{2}{*}{\textbf{\lp}} & \textbf{min} & 0.03 & 0.02 & 0.02 & 0.09 & 39 & 0 & 0 \\
    \textbf{} & \textbf{} & \textbf{max} & 0.08 & 0.01 & 0.06 & 0.11 & 39 & 0 & 0 \\
    \bottomrule
    \end{tabular}
    \caption{Averaged GrepS and BPIC runtime results in seconds for computing importance.}
    \label{tab:app_epidemic_seperated}
\end{table*}

\end{document}